\def\figref#1{figure~\ref{#1}}
\def\Figref#1{Figure~\ref{#1}}
\def\secref#1{section~\ref{#1}}
\def\Secref#1{Section~\ref{#1}}
\def\eqref#1{equation~\ref{#1}}
\def\1{\bm{1}}
\def\ra{{\textnormal{a}}}
\def\rc{{\textnormal{c}}}
\def\rs{{\textnormal{s}}}
\def\rx{{\textnormal{x}}}
\def\rz{{\textnormal{z}}}
\DeclareMathAlphabet{\mathsfit}{\encodingdefault}{\sfdefault}{m}{sl}
\SetMathAlphabet{\mathsfit}{bold}{\encodingdefault}{\sfdefault}{bx}{n}
\def\gL{{\mathcal{L}}}
\newcommand{\E}{\mathbb{E}}
\newcommand{\KL}{D_{\mathrm{KL}}}
\DeclareMathOperator*{\argmin}{arg\,min}
\def\mdp{{\mathcal{M}\mathcal{D}\mathcal{P}}}
\newtheorem*{remark}{Remark}
\newtheorem{lemma}{Lemma}
\newtheorem{proposition}{Proposition}
\renewcommand{\Secref}[1]{Sec.~\ref{#1}}
\renewcommand{\secref}[1]{Sec.~\ref{#1}}
\renewcommand{\Figref}[1]{Fig.~\ref{#1}}
\renewcommand{\figref}[1]{Fig.~\ref{#1}}
\newcommand{\method}[0]{TCUP}
\title{Taming Continuous Posteriors for Latent Variational Dialogue Policies}
\author{Marin Vlastelica\textsuperscript{1, 2}, Patrick Ernst\textsuperscript{2}
\and György Szarvas\textsuperscript{2} \\
    \textsuperscript{1}Autonomous Learning Group, Max Planck Institute for Intelligent Systems, T\"ubingen, Germany\\
    \textsuperscript{2}Amazon Development Center Germany GmbH, Berlin, Germany\\
    \textsuperscript{1}marin.vlastelica@tuebingen.mpg.de, \textsuperscript{2}\{peernst, szarvasg\}@amazon.de}
\begin{document}

\maketitle

\begin{abstract}
  Utilizing amortized variational inference for latent-action reinforcement learning (RL) has been shown to be an effective approach in Task-oriented Dialogue (ToD) systems for optimizing dialogue success.
  Until now, categorical posteriors have been argued to be one of the main drivers of performance.

  In this work we revisit Gaussian variational posteriors for latent-action RL and show that they can yield even better performance than categoricals.
  We achieve this by simplifying the training procedure and propose ways to regularize the latent dialogue policy to retain good response coherence.
  Using continuous latent representations our model achieves state of the art dialogue success rate on the MultiWOZ benchmark, and also compares well to categorical latent methods in response coherence.
\end{abstract}

\section{Introduction}

Task-oriented Dialogue (ToD) systems have reached a degree of maturity, which enables them to engage with human users and assist them in various tasks.
They are able to steer natural-language conversations in order to complete users' goals, such as booking restaurants, querying weather forecasts and resolving customer service issues.
At their core, the behavior of these systems is controlled by a dialogue policy, which receives user inputs in the form of utterances and additional features or states.

Template-based methods~\cite{walker2007individual, inaba2016neural} leverage ranking or classification approaches to select the most fitting response from a pre-defined set of responses, i.e. templates.
While template-based methods offer better control over the dialogue policy behavior,
they are less versatile due to their dependency on template sets. 
Moreover, constructing comprehensive template sets is a challenge in itself \cite{Gao:2019}.
In retrieval-based approaches~\cite{yan2016learning, henderson2019training, tao2019multi} candidate responses
are not predefined, but are retrieved from massive dialogue corpora, e.g. by executing ad-hoc search queries a priori.

Generative models do not require such additional inputs as prior knowledge.
They enable end-to-end (E2E) learning of dialogue
policies with the potential of extrapolation to diverse responses \cite{serban2017hierarchical, zhao2017learning, gu2018dialogwae}.
Methods based on amortized variational inference achieve this by inferring low-dimensional variational posteriors mapped from dialogue contexts, similar to variational autoencoders (VAEs)~\citep{kingma2013auto}.
Even though such fully data-driven approaches offer great versatility
and a faster adoption,
they may exhibit degenerate behavior by generating incomprehensible utterances.
This is apparent in multi-turn dialogues, which span hundreds of words, while the
success signal is only observed at the end of dialogues.
Reinforcement learning (RL) based approaches are able to optimize for such long-term, sparse rewards and have been applied in this setup.
Previous approaches prominently applied word-level RL \cite{lewis2017deal, kottur2017natural},
 where the action space is defined over the entire vocabulary.
 The response utterances are then generated auto-regressively by consecutive next-word predictions.
Unfortunately, the use of large action spaces often impedes the convergence of policy
learning algorithms, which makes it hard to ensure coherent responses.
Prior work makes use of \emph{latent-action RL} to address the dimensionality problem by utilizing variational inference approaches \citep{zhao:2019,lubis:2020}.
These methods rely on a supervised learning stage followed by fine-tuning via reinforcement learning in the latent space.

In this paper we follow this paradigm, but extend prior work substantially by introducing the \method{} approach, which aims to \emph{T}ame \emph{C}ontin\emph{U}ous \emph{P}osteriors for latent variational dialogue policies. \method{} makes the following contributions:
\textbf{(i)} A new formulation of the variational inference objective for
learning continuous latent response representations without auxiliary learning objectives.
\textbf{(ii)} A more robust approach for learning from offline ToD data in a
RL setup, which utilizes the fact that we are dealing with
expert dialogue trajectories.

We compare the performance of our proposed method to competing approaches on
the MultiWOZ benchmark~\citep{budzianowski2018MultiWOZ}.
Our experimental results show that we are able to improve
the state-of-the-art performance across different benchmark metrics.
Apart from MultiWOZ's context-to-text metrics, we demonstrate the benefits of
\method{}'s learned latent representations quantitatively using a clustering analysis following \citet{lubis:2020}. %

\section{Preliminaries}\label{sec:preliminaries}
Usually, latent-action reinforcement learning methods are trained in two stages:
\textbf{(i)} In the first stage an encoder-decoder architecture
is applied to learn latent representations from a supervised signal, i.e. latent actions, over
dialogue responses.
\textbf{(ii)} In the consecutive stage a RL-based policy predicts the best latent
action given a particular context in order to optimize for long-term
dialogue success.
The trained decoder stays fixed and receives the output of the RL policy to
generate the final reponse utterance.

\paragraph{Learning Latent Response Representations}
We denote the dialogue context as $\rc$, which contains the user input utterance
and dialogue state, and the response utterance generated by the system as $\rx$.
Provided a dataset of context and optimal response pairs ${(\rc, \rx)}$,
we want to extract a latent representation $\rz$, representing the
dialogue responses given context.
Approaches based on variational inference have shown to be beneficial for learning such latent representations.
This is done by optimizing the evidence lower bound (ELBO):
\begin{equation} \label{eq:full-elbo}
  \begin{split}
      \gL(\phi, \theta) = \E_{q_{\theta}( \rz| \rx, \rc)}  [-\log p_\phi(\rx | \rz) ] \\
      + \KL[ q_\theta( \rz | \rx, \rc) || p( \rz | \rc )]
  \end{split}
\end{equation}
Where $q_{\theta}$ denotes the variational posterior parametrized by $\theta$ and  $p_\phi$ the decoder parametrized by $\phi$.
Arguing that the full ELBO formulation suffers from ``explaining away'', \citet{zhao:2019} introduced the ``lite'' ELBO to mitigate this issue:
\begin{equation} \label{eq:lite-elbo}
  \begin{split}
      \gL(\phi, \theta) = \E_{q_\theta( \rz| \rc)}  [-\log p_\phi(\rx | \rz) ] \\
      + \KL[ q_\theta( \rz | \rc) || p( \rz )]
  \end{split}
\end{equation}
The goal is to be closer to the information available during testing time,
where the decoder only sees $z$ conditionally sampled on the context $q_\theta(\rz | \rc)$.
\paragraph{Learning Latent Dialogue Policies}
After learning to extract a compressed representation $\rz$ in the supervised learning stage, the encoder $q_\phi(\rz | \rc)$ is fine-tuned via reinforcement learning to optimize the dialogue reward,
where mostly directly  the success metric is used.
A Markov Decision Process ($\mdp$) is defined as a tuple ($\mathcal{S}, \mathcal{A}, r, p$) of state space $\mathcal{S}$, action space $\mathcal{A}$, reward function $r$, and transition density $p$.
The general goal of reinforcement learning is to optimize the expected return of policy $\pi$, denoted as $\E[J(\pi)]$.
Many works utilize a Monte-Carlo estimate of the policy gradient:
\begin{equation} \label{eq:mc-pgrad}
  \nabla_\phi \E  \Big [J(\pi_\phi) \Big ] = \E \Big [  \nabla_\phi \sum_{(\rs, \ra) \in \tau} \log \pi_\phi(\ra | \rs) r(\rs,\ra) \Big ]
\end{equation}
Where the expectation is taken over the trajectory distribution $\eta^\pi(\tau)$, i.e. distribution of sequences of $(\rs, \ra)$.
In the context of ToD, we may cast the state $\rs$ as being the underlying dialogue state which may be unobserved, in which case the problem becomes partially observable.
In our setting, the latent dialogue policy is warm-started by the parameters of the variational posterior from the first stage of training.

\section{Method}\label{sec:method}
\begin{figure*}
  \centering
  \includegraphics[width=0.9\linewidth]{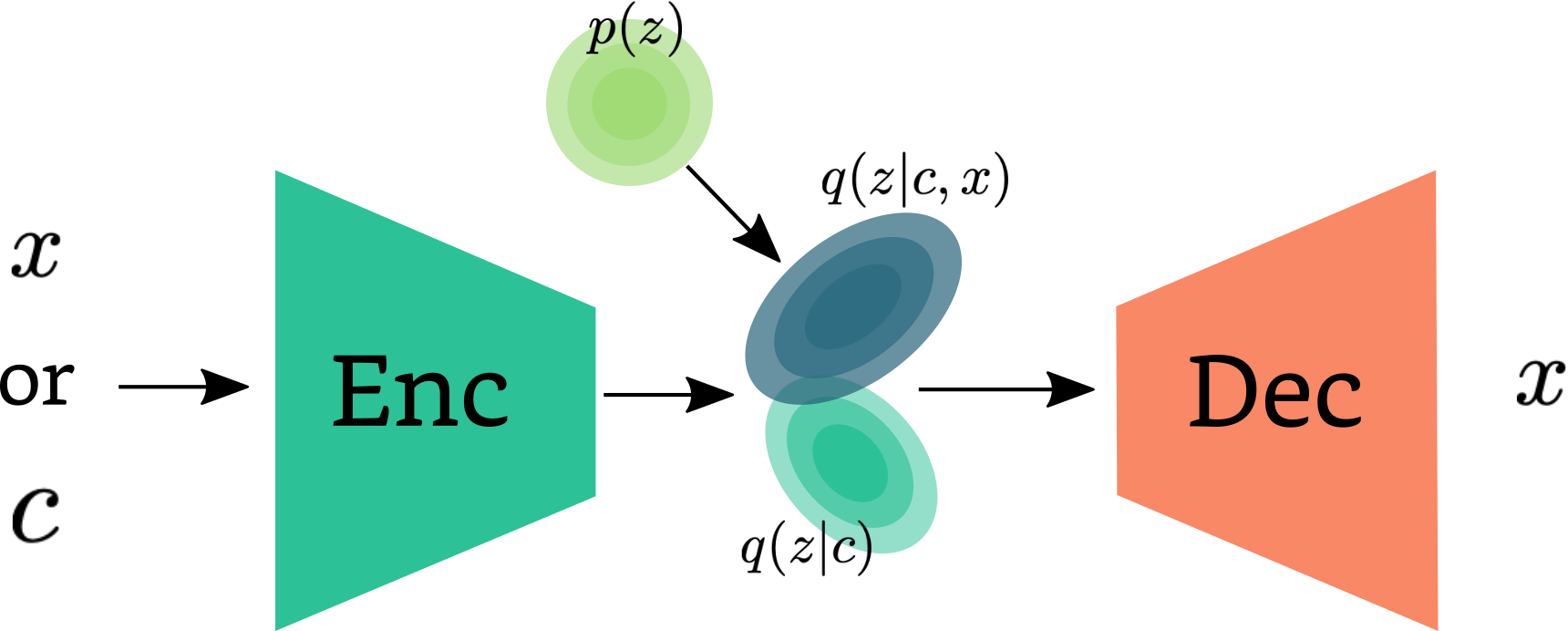}
  \caption{
    High-level schema of the proposed supervised training stage.
    The encoder receives either context or both the context and the response that is to be decoded from latent $\rz$.
    In a sense, when samples from $q(\rz | \rc, \rx)$ are used for decoding,
    the architecture acts as a proper conditional autoencoder.
    We use the normal distribution family for modelling the distribution of $\rz$.
    }
\end{figure*}
In line with the general latent action RL paradigm
outlined in \Secref{sec:preliminaries}, \method{} is also
based on a two-stage approach.
In \secref{sec:elbo} we describe the reformulation of the ELBO
used in the supervised learning stage, which is vital to reach
state-of-the performance with continuous variational posteriors.
In \secref{sec:coh} we discuss our reinforcement learning setup
and introduce methods to address the deterioration of \emph{latent action} based dialogue policies by regularization.

\subsection{Revisiting the Full ELBO}\label{sec:elbo}
As aforementioned prior work by \citet{zhao:2019} has introduced the ``lite'' ELBO
(described in \eqref{eq:lite-elbo}) to alleviate the overexposure bias that emerges when they incorporate response information in the optimization objective.
Indeed, if we only optimize for samples of $q(\rz | \rc, \rx)$,
the most information about the responses ($\rx$) is found within the responses themselves.
However, not conditioning on $\rx$ will reduce the expressiveness of
the variational posterior, especially if responses and contexts share indicative
patterns.

\def\qp{{q^p_\zeta}}
\def\q{{q_\theta}}
\def\p{{p_\phi}}
In this work we rely on a full ELBO and introduce a conditional prior $\qp(\rz| \rc)$ indicated by the superscript $p$
which is constrained by the free prior $p(z)$ and also parametrized.
This leads us to the following objective:
\begin{equation} \label{eq:full-elbo-main}
  \begin{split}
    \gL(\phi, \theta, \zeta) &= \KL[ q_\theta || \qp] + \KL[\qp || p(\rz)] - \\
    &\E_\q[\log \p(\rx | \rz )] - \E_\qp[\log \p(\rx | \rz)]
  \end{split}
\end{equation}
We defer the full derivation of this objective to the Appendix \Secref{sec:app:math}.

\noindent{}During training time, we leverage samples from both the prior and the variational
posterior for reconstruction, which makes sure that the decoder doesn't overly
rely on information from $\rx$.
Apart from achieving better performance as outlined in \Secref{sec:experiments}, this also simplifies the training procedure considerably.

\subsection{Regularizing for Success and Coherence}\label{sec:coh}
Reinforcement learning algorithms have shown to be able to exploit weaknesses
in simulators of game environments \citep{mnih2013playing} by finding high
reward states which don't align with solving the underlying task.
Similarly, in the context of latent action RL for ToD,
the decoder can be seen as a weak simulator that is prone to exploitation due to the nature of the success rate metric.
By checking for relevant slots to be present in the response, success rate permits that long, possibly incoherent responses more easily achieve success due to higher chance of containing the correct slot tokens.
Anecdotal evidence for this is presented in Table~\ref{tb:dialogue}.
When using continuous latent actions, this issue is even more severe, since
 the model is able to sample out of distribution (OOD) utterances that provide success.
For measuring response coherence, we rely on the BLEU metric as specified in the MultiWOZ benchmark.

\subsubsection*{Penalizing Out of Distribution Samples}
In the supervised training stage we apply variational inference and
implicitly maximize the BLEU score through maximizing the ELBO(see \eqref{eq:mc-pgrad}).
We denote this BLEU-maximizing policy with $\pi^{VI}$.
Since we constrain our policy with an
isotropic Gaussian prior in the first training stage, we can leverage this
information to prevent the policy from deviating from this prior in the form of a
divergence cost which can be efficiently computed.
Concretely, the regularized reward function is defined as follows:
\begin{equation}
  \begin{split}
    r(\rx, \rc) = succ(\rx, \rc) - \beta \KL[\pi(\rz | \rc) \,|| \, p(\rz)]
  \end{split}
\end{equation}

\subsubsection*{ToD as Offline RL}
We are dealing with an offline reinforcement learning problem,
since we have a dataset of optimal responses without the possiblity of
obtaining more samples via a simulator or users.
By shifting the reinforcement learning problem to the latent space,
we are implicitly creating a surrogate online problem, where
we need to obtain samples form $\pi(\rz | \rc)$ and evaluate them.

We argue that one of the reasons why Gausssian latent spaces have been reported
as under-performing in comparison to categoricals is the biased and noisy gradient
estimate based on samples from a single dialogue.
Contrary to prior work \cite{lubis:2020,zhao:2019}, which estimates the policy
gradient over the responses from a single dialogue sample,
we take advantage of a Monte Carlo policy gradient estimate across \emph{multiple dialogues}.

\subsubsection*{Replaying Succesful Samples}

Re-using encountered experience by storing it in memory (also called a replay buffer) has proved to be beneficial for sample-efficiency in
reinforcement learning.
However, a naive usage of the buffer has multiple caveats in the MultiWOZ setting.
Firstly, since the success signal is calculated on the dialogue level, some
responses that might be actually successful conditioned on the dialogue state
and context, might be labelled as negative.
Intuitively, the policy can start off the dialogue correctly, but fail to successfully complete it.
This can lead to conflicting examples in the replay buffer, which destabilises training.
Secondly, we have a many-to-one mapping from responses to success, which leads
to multimodality, but also modes that might be incoherent.

If we make the observation that the response at time step $t$ is conditionally independent of the dialogue history given the dialogue state and input utterance,
we notice that we can attribute success directly to the response independent of past utterances.
This motivates the storage of only successful responses in the replay buffer, which we sample
 as a fraction of the training batch.
That way it mitigates the problem of false negative responses and ensures fewer conflicting examples in the batch.
Replaying past experience also ensures that certain difficult samples are not forgotten, which both increases training stability and ensures that the current policy $\pi$ stays close to $\pi^{VI}$.

In practice, we exchange a certain sample in the batch generated
by the current policy with a sample from the replay buffer with probability $\lambda$.
As $\lambda \rightarrow 0$ we arrive to the simple REINFORCE update,
$\lambda=1$ means that we only use replayed samples for updates.

\begin{table*}[!th]
  \centering
  \caption{Results comparison to competing methods on MultiWOZ.
  We have marked methods that utilize  RL with $\dagger$, transformer architecture with $\star$.
  }
  \resizebox{\textwidth}{!}{
    \begin{tabular}{ l | c | c | c | c | c | c | c}
      \toprule
      Model  & BLEU	& Inform	& Success	 & Av. len. &	CBE	 & \# unigrams	& \# trigrams \\
      \midrule
      MarCo$^\star$\ifdefined\tablecites \cite{wang2020multi} \fi  &   	17.3	 & 94.5 & 	87.2	 & 16.01	 & 1.94	 & 319	& 3002 \\
      HDSA\ifdefined\tablecites \cite{chen2019semantically} \fi    & \textbf{20.7} & 	87.9	 & 79.4 & 	14.42 & 	1.64	& 259 & 	2019 \\
      HDNO$^\dagger$\ifdefined\tablecites\cite{wang2020modelling} \fi      &	17.8 &	93.3 &	83.4 &	14.96 &	0.84	& 103 &	315 \\
      SFN$^\dagger$\ifdefined\tablecites\cite{mehri2019structured}  \fi   &	14.1 &	93.4	 & 82.3 &	14.93 &	1.63 &	188 &	1218 \\
      UniConv\ifdefined\tablecites\cite{le2020uniconv}  \fi &   	18.1 &	66.7 &	58.7 &	14.17 &	1.79 &	338 &	2932 \\
      LAVA$^{\dagger}$\ifdefined\tablecites\cite{lubis:2020}  \fi \footnotemark[1]   &	16.2  &	89.7 &	77.6 &	14.41 &	1.96 &	272 &	2365 \\
      LAVA$^\dagger$-Cat\ifdefined\tablecites\cite{lubis:2020} \fi    &	10.8 &	95.9 &	93.5 &	13.28 &	1.27 &	176 &	708 \\
      \midrule
      \method{}$^\dagger$  & 10.3 &  \textbf{96.3} & \textbf{95.9} & 15.14 & 1.44 & 210 & 1838 \\
      \method{}$^\dagger$-Cat   & 14.3 & 96.1 & 92.1 & 14.33 & 1.61 & 230 & 1490 \\
      \bottomrule
    \end{tabular}
  }

  \label{tab:results}
\end{table*}

\begin{table*}[!th]
  \centering
  \caption{Example of successfull responses where the policy was able to
  ``game'' the metric, leading to a successful dialogue but resulting in incoherent utterances and hence low BLEU score.}
  \label{tb:dialogue}
  \begin{tabular}{ | l | p{12cm} |}
    \hline
    USER &  we are staying [value\_count] people for [value\_count] nights starting from [value\_day] . i need the reference number?\\
    \hline
    SYSTEM & [hotel\_name] [hotel\_name] [hotel\_name] [hotel\_reference] [hotel\_name] [hotel\_reference] [hotel\_name] [hotel\_reference] [hotel\_name] [hotel\_phone] [hotel\_phone] [hotel\_phone] [hotel\_phone] [hotel\_phone] [hotel\_phone]   [hotel\_phone] [hotel\_phone] [hotel\_phone] [hotel\_name] [hotel\_phone] [hotel\_name] -ly [hotel\_phone] [hotel\_name] -ly [hotel\_name] -ly [hotel\_name] -ly [hotel\_name] -ly [hotel\_name] -ly [hotel\_name]  ... \\
    \hline
    USER & just the address please \\
    \hline
    SYSTEM & [attraction\_name] is located at [attraction\_address], postcode [attraction\_postcode]. the phone number is [attraction\_phone]. [attraction\_name] is located at [attraction\_address], [attraction\_postcode]. phone number is [attraction\_phone] [attraction\_name] is located at [attraction\_address], [attraction\_postcode]. phone number is [attraction\_phone]. [attraction\_name] is located at [attraction\_address], [attraction\_postcode]. phone\\
    \hline
  \end{tabular}
\end{table*}

\section{Experiments}\label{sec:experiments}
We provide a detailed evaluation of \method{}'s dialogue policy in \Secref{sec:diapolresults}.
This includes performance on the MultiWOZ benchmark; an ablation study to assess the importance of our technical contributions proposed in \Secref{sec:method}; and a qualitative analysis of response coherence.
Finally, we analyze the quality of the latent representations in \Secref{sec:latentanalysis} and provide evidence that \method{} is able to learn representations using continuous latent variables which yield good separation and clustering of domains and actions present in MultiWOZ.
We make use of a recurrent encoder-decoder architecture with dot-product attention, further details on the architecture and training procedure can be found in Appendix \Secref{sec:app:modeltraining}.

\paragraph{Supervised Learning Stage (SL)} Here, we learn the mapping from context to response
using \eqref{eq:full-elbo}. In practice, we optimize the $p_\phi(\rx | \rz)$ part with a weighted cross-entropy loss that puts higher weights on slot placeholders in the response.
The best model is selected based on its BLEU score.

\paragraph{Reinforcement Learning Stage (RL)} In this stage we fix the parameters of the decoder and train only the encoder parameters via policy gradient with two important modifications: (i) we use a batched version of the policy gradient which contains samples from multiple dialogues and hence reduces the variance of the gradient, and (ii) in each batch we sample a mix of newly generated and old experience (with which we have obtained a success signal earlier). This
implicitly keeps us close to the starting policy which results in more stable training.

We have found that it is also beneficial to replace the standard  $\KL[q || p]$ term of the variational objective with a symmetric version of it $\frac{1}{2} (\KL[q || p] + \KL[p || q])$.
This ensures that regions where the densities of $p$ and $q$ behave differently are treated equally irrespective of the ordering.

\subsection{Context-to-Response Generation}\label{sec:diapolresults}
We validate the proposed method on the MultiWOZ benchmark \citep{wang2020multi}
in the policy learning task with ground-truth dialoge states and use the
same delexicalization approach as in \citet{lubis:2020}.
Table~\ref{tab:results} shows that \method{} improves the state-of-the-art inform- and success rate metric across all competitors.
Also, it is competitive in terms of language diversity metrics.
Compared to the currently best performing latent action reinforcement learning approach (LAVA), we increase all metrics except for minor decreases in the BLEU metric.
The response coherence score is further discussed in \Secref{sec:eval_bleu_deterioration}.

\paragraph{Latent Representations}
While prior latent-action reinforcement learning approaches\citep{lubis:2020,zhao:2019} favor categorical latent
distributions with modified attention mechanisms in the decoder,
our results demonstrate that relying on isotropic Gaussian latents
is advantageous, even if we only use simple dot-product attention in the decoder.
In particular, if we base \method{} on
categorical latent distributions (coined \method{}-Categorical in
Table~\ref{tab:results}) we observe competitive results
in terms of inform and success rate compared to prior work, but inferior
performance compared to \method{} with continuous latents.

Using categoricals limits us in the sense that we don't make full use of the nonlinearity in the decoder to decode diverse responses, which manifests itself with poor results in the diversity metrics (Table \ref{tab:results}).
By using a continuous latent distribution, we are able to improve the diversity of the generated responses compared to those two competitors.

\footnotetext[1]{Results taken for best runs, mean performance is actually lower, additional commentary available in Appendix \secref{app:sec:lava-gauss}}
\paragraph{Ablation Study}
\Figref{fig:performance-hist} presents an ablation study of different parts of \method{}.
In general, optimal replay (\emph{rep}) and the constrained rewards with the additional KL penalty (\emph{KL}) lead to higher success rate and BLEU, with optimal replay being the superior choice between the two, especially in the BLEU metric.
Further gains can be achieved by utilizing both.
Not using optimal replay introduced higher variance across runs.
During the first training stage, utilizing prior and
posterior shuffling (\emph{sl shuff}) enables a better fit in terms of BLEU scores.
Histograms of further metrics used in the MultiWOZ benchmark can be found in Appendix \Secref{app:histograms}.
In summary, these results (\emph{shuff kl rep}) validate the benefits of the proposed variational inference objective and regularization.

\begin{figure}[h]
    \centering
    \newlength{\subfig}
    \setlength{\subfig}{{1.0\linewidth}}
    \begin{subfigure}{\subfig}
        \includegraphics[width=\linewidth]{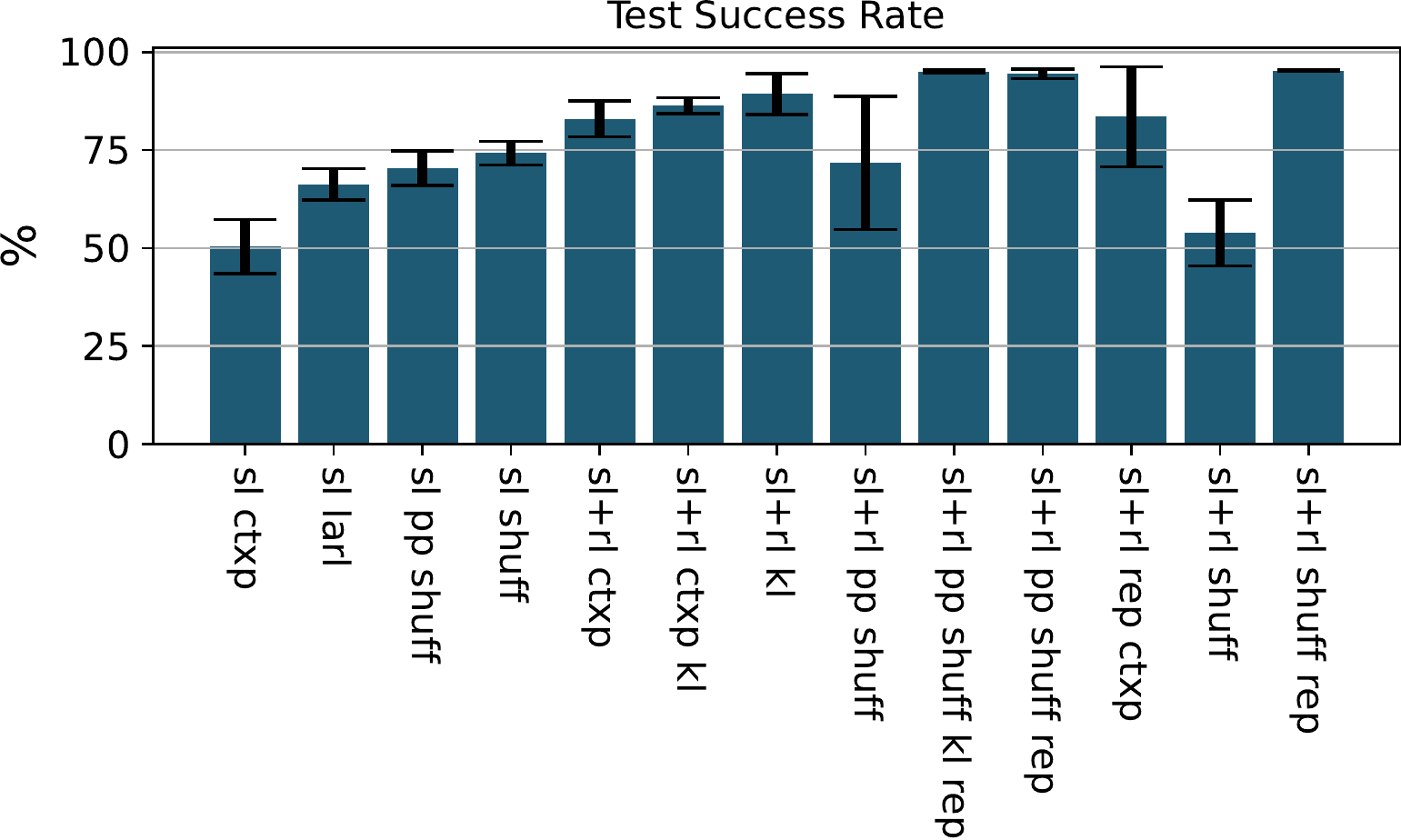}
    \end{subfigure}
    \begin{subfigure}{\subfig}
        \includegraphics[width=\linewidth]{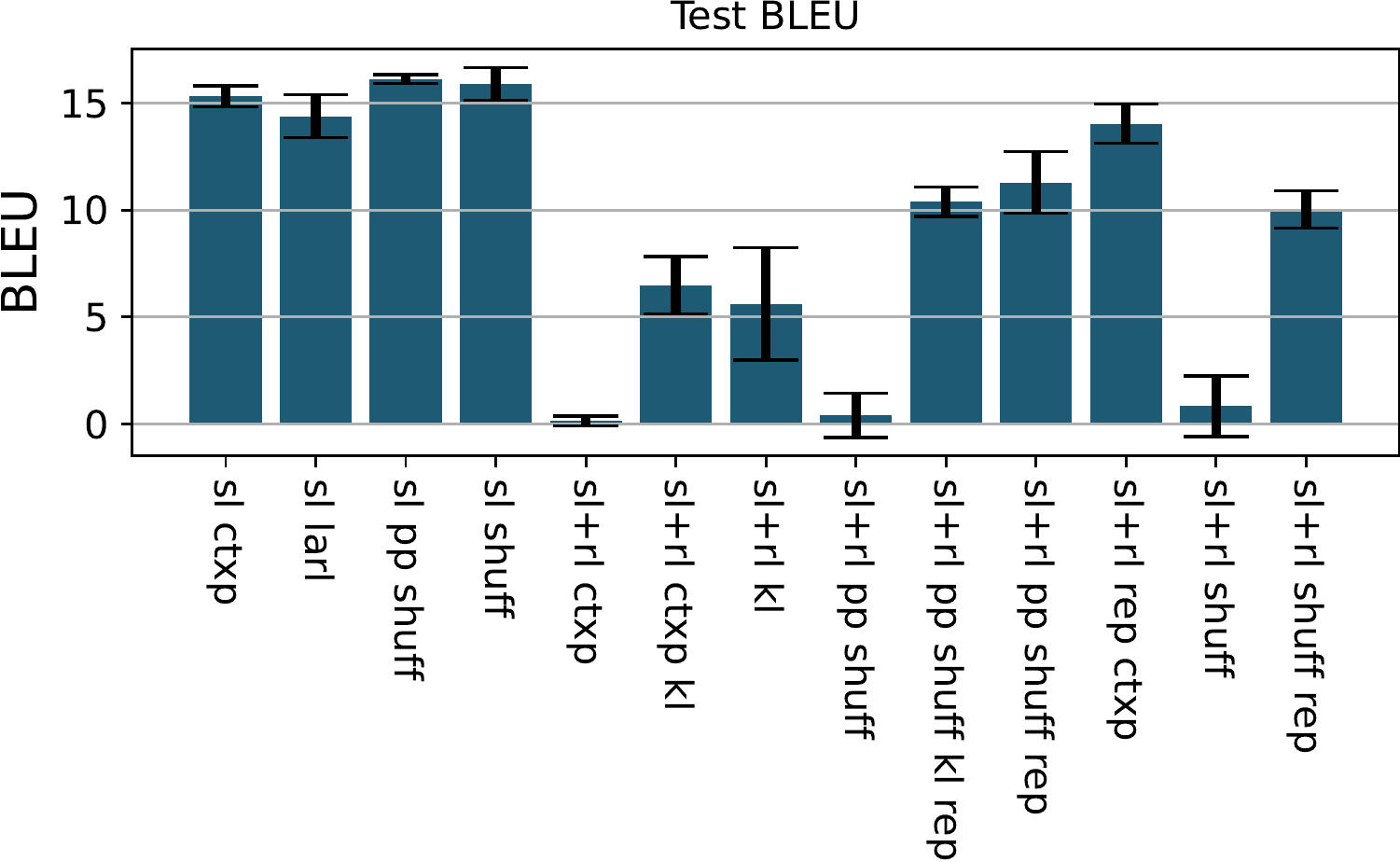}
    \end{subfigure}
    \caption{Ablation study. All \method{} configurations are averaged over three random seeds.
    \emph{sl} denotes supervised training, \emph{sl+rl} supervised training followed by reinforcement learning,
    \emph{ctxp} usage of contextual prior, but without any prior on it,
    \emph{kl} the addition of KL penalty term to identity covariance Gaussian,
    \emph{larl} the approach proposed by \citet{zhao:2019},
    \emph{shuff} prior and posterior shuffling in the supervised stage,
    \emph{pp} identity covariance Gaussian prior on the contextual prior,
    \emph{rep} usage of replay buffer
    }

    \label{fig:performance-hist}
  \end{figure}

\subsubsection*{Response Coherence}\label{sec:eval_bleu_deterioration}
We have observed that it is possible to maximize the dialogue success rate at the cost
of lower coherence in terms of BLEU scores.
In \figref{fig:bleu-deterioration} we show that we achieve state-of-the-art success rates,
whilst achieving a BLEU score of effectively 0.
This is an artifact of the success rate metric in that it disregards coherence of responses and only checks if the correct slot values have been addressed by the dialogue policy.

In \figref{fig:pareto-front} we see different runs of our method with different strength of regularization in terms of KL penalty and optimal replay fraction.
Depending on the strength
of regularization they form a Pareto front. This further shows the multi-objective
trade-off between success rates and BLEU scores.
\begin{figure}
  \centering
  \includegraphics[width=\linewidth]{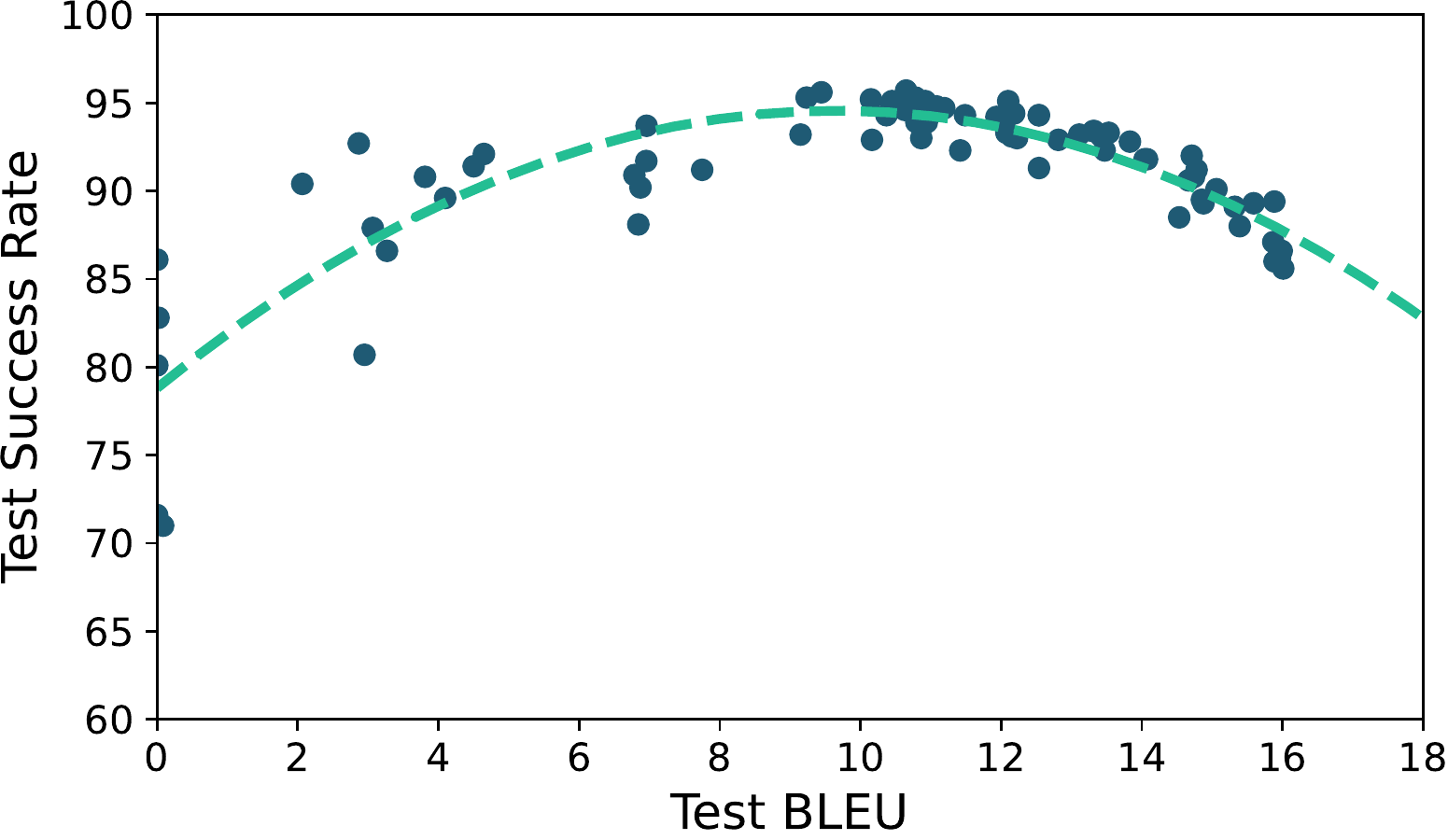}
  \caption{Each point depicts a training run and the line is a 2nd degree polynomial fit of the runs illustrating the Pareto front.}
  \label{fig:pareto-front}
\end{figure}
The BLEU collapse is to be expected when using latent-action reinforcement learning, since longer and more diverse answers
have positive impact on the dialogue success, which leads to the policy selecting degenerate responses.
Furthermore, if no additional regularization is used, the latent policy learns to sample outliers in terms of the prior over $\rz$.
We argue that it is not realistic to expect the policy resulting from the reinforcement learning stage to outperform the response coherence of the supervised learning stage in terms of BLEU (as long as the primary purpose of the RL stage is to improve dialogue success metrics).
Instead, we show that through regularization and optimal replay, the BLEU score can be kept from deterioration through the RL stage while we optimize dialogue level metrics.
An alternative approach to alleviating degenerate policies would be to simply make the BLEU score or other coherence metrics part of the reward function.
For example, the final comparison of models in \citep{budzianowski2018MultiWOZ} is done by comparing $\frac{success+inform}{2} + BLEU$.
However, we suggest that there are multiple problems with the approach of maximizing such a hybrid metric directly.
First of all, constructing reliable metrics is challenging \cite{jiang2021towards, mehri2020unsupervised} in itself.
More importantly, as soon as the coherence metric is part of the reward,
we encounter the problem of adequate scaling in comparison to the success rate.

\begin{figure}
  \centering
  \begin{tabular*}{\linewidth}{c}
    \includegraphics[width=1.0\linewidth]{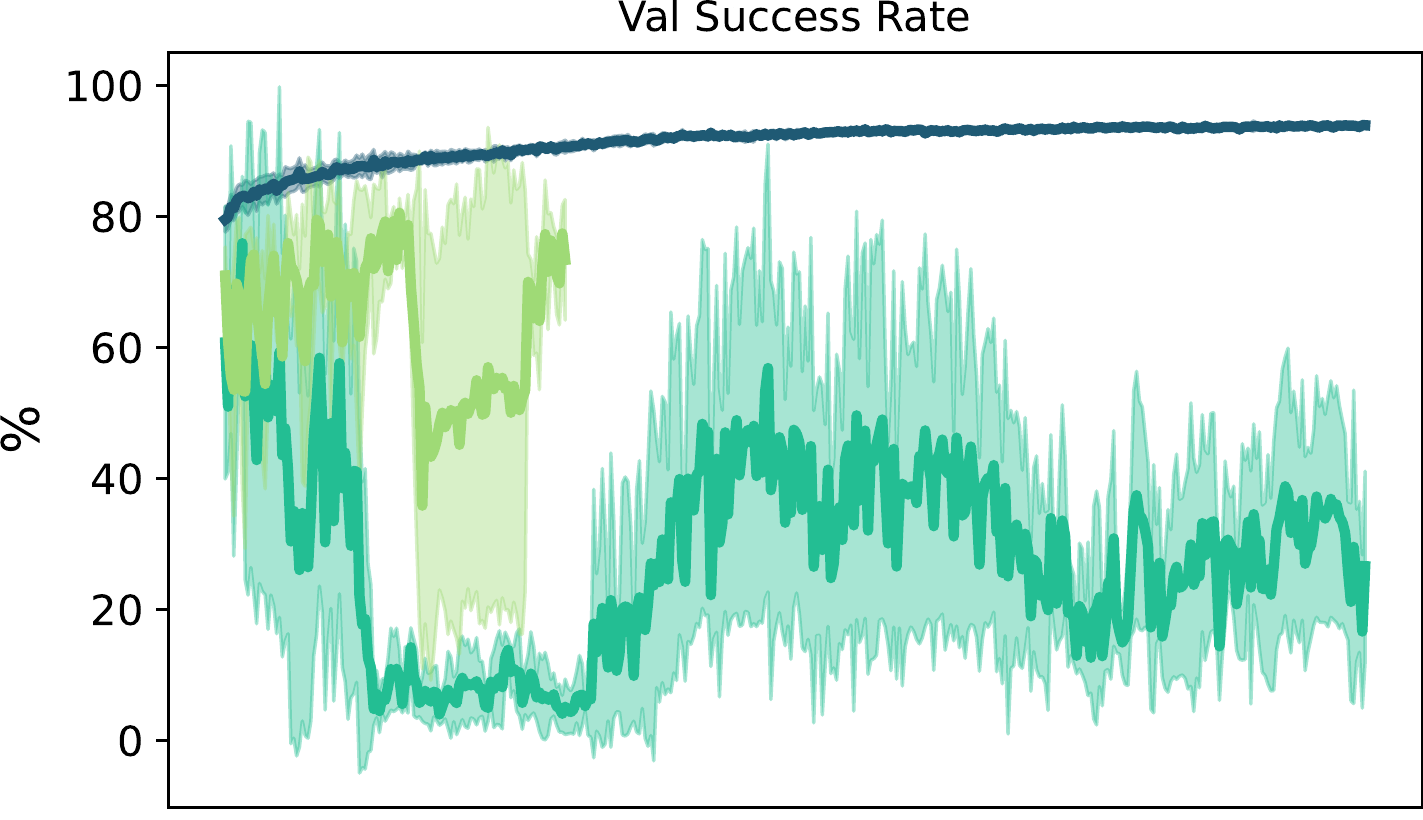} \\
    \includegraphics[width=1.0\linewidth]{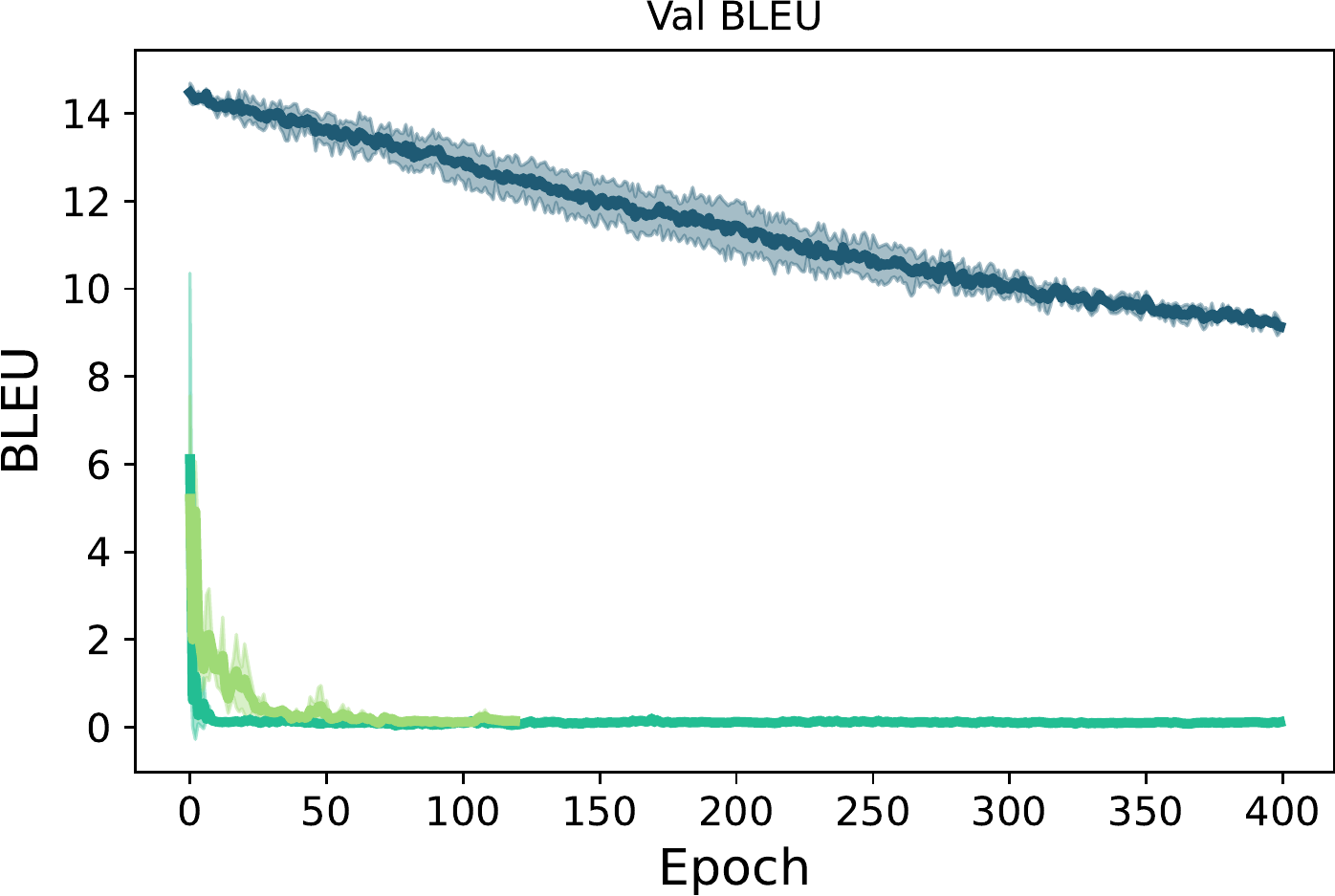}\\
    \includegraphics[width=1.0\linewidth]{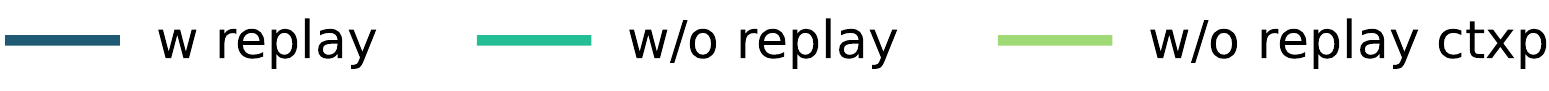}
  \end{tabular*}
  \caption{Influence of optimal replay on success rate and BLEU score over multiple epochs}
  \label{fig:bleu-deterioration}
\end{figure}

\paragraph{Impact of Regularization}
When introducing regularization in the form of optimal replay sampling and the KL penalty term, we are able to achieve state-of-the-art performance in terms of success rate without lowering the BLEU score significantly.
As depicted in \figref{fig:bleu-deterioration} our method using optimal replay buffer sampling is more stable during training in comparison to the naive application of the policy gradient.
Also, we observe that the BLEU deterioration is kept at bay and roughly deteriorates linearly over time.
By increasing  the replay fraction $\lambda$ too much we are over-constraining the latent dialogue policy to the initial experience, which leads to biased updates and hurts exploration.
Nevertheless, as shown in the ablation study (\figref{fig:performance-hist}) optimal replay and the KL penalty term have shown to be essential for preventing the BLEU score from deteriorating too rapidly.
In this work we aimed to retain a BLEU score that is competitive to that reported by LAVA~\citep{lubis:2020}, while improving overall dialogue success.
A sensitivity analysis over the weights of the penalty term and the
replay fraction $\lambda$ is described in Appendix \figref{app:fig:regularization-sensitivity}.

\begin{figure*}[h]
  \centering
  \begin{subfigure}{0.32\linewidth}
    \includegraphics[width=\linewidth]{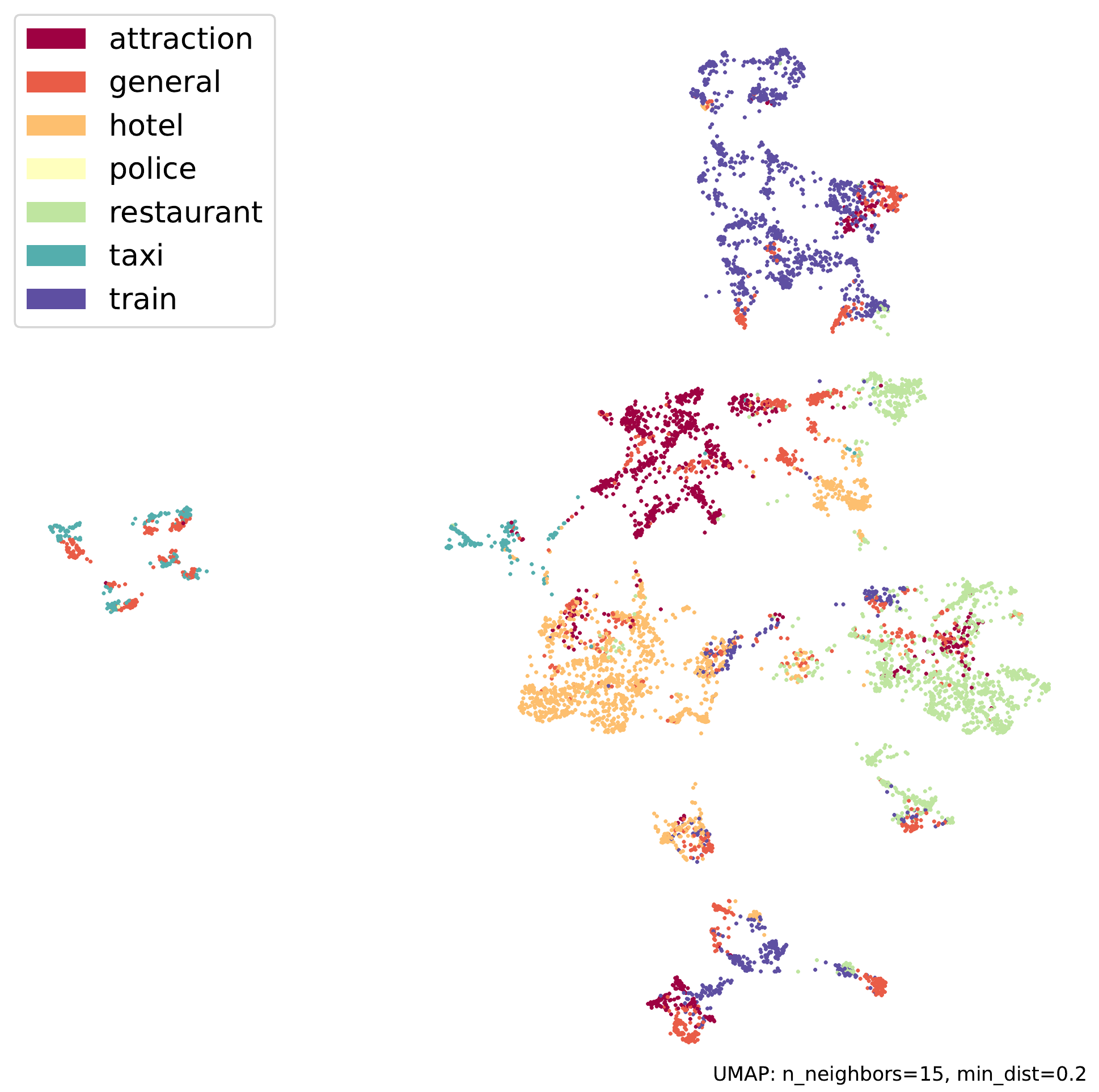}
    \caption{}
    \label{fig:umap-a}
  \end{subfigure}
  \begin{subfigure}{0.32\linewidth}
    \includegraphics[width=\linewidth]{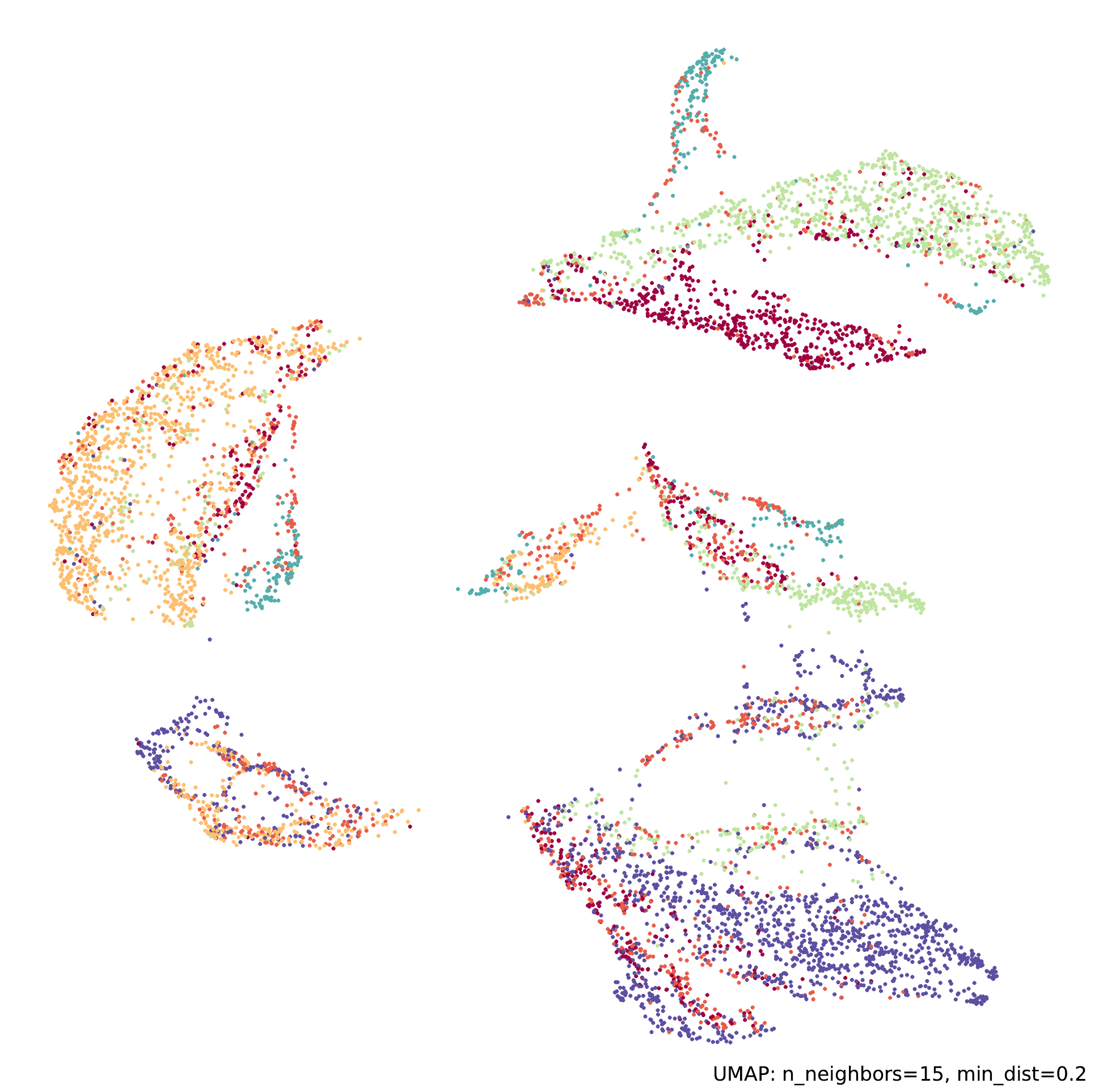}
    \caption{}
    \label{fig:umap-b}
  \end{subfigure}
  \begin{subfigure}{0.32\linewidth}
    \includegraphics[width=\linewidth]{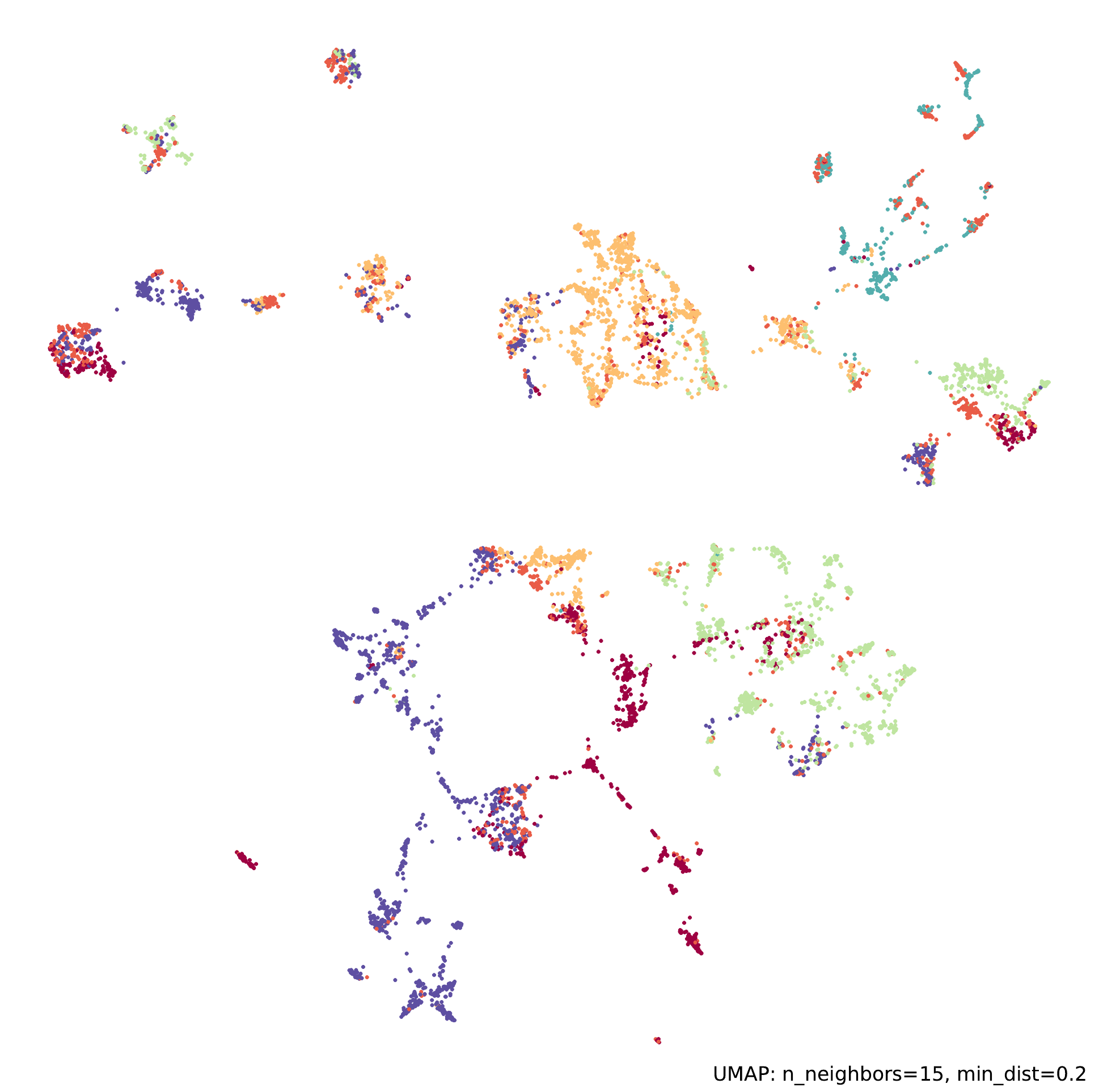}
    \caption{}
    \label{fig:umap-c}
  \end{subfigure}
  \begin{subfigure}{0.32\linewidth}
    \includegraphics[width=\linewidth]{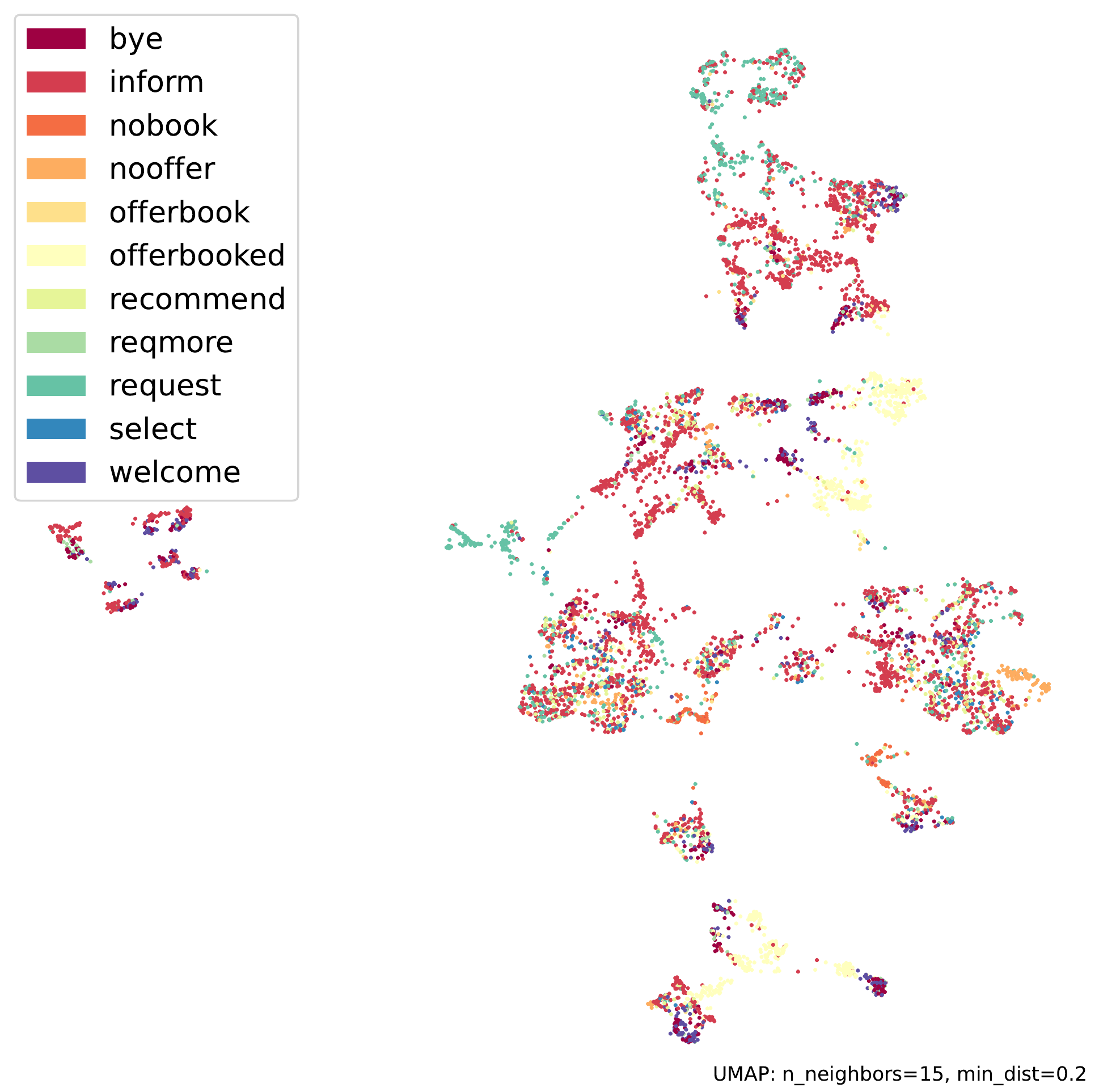}
    \caption{}
  \end{subfigure}
  \begin{subfigure}{0.32\linewidth}
    \includegraphics[width=\linewidth]{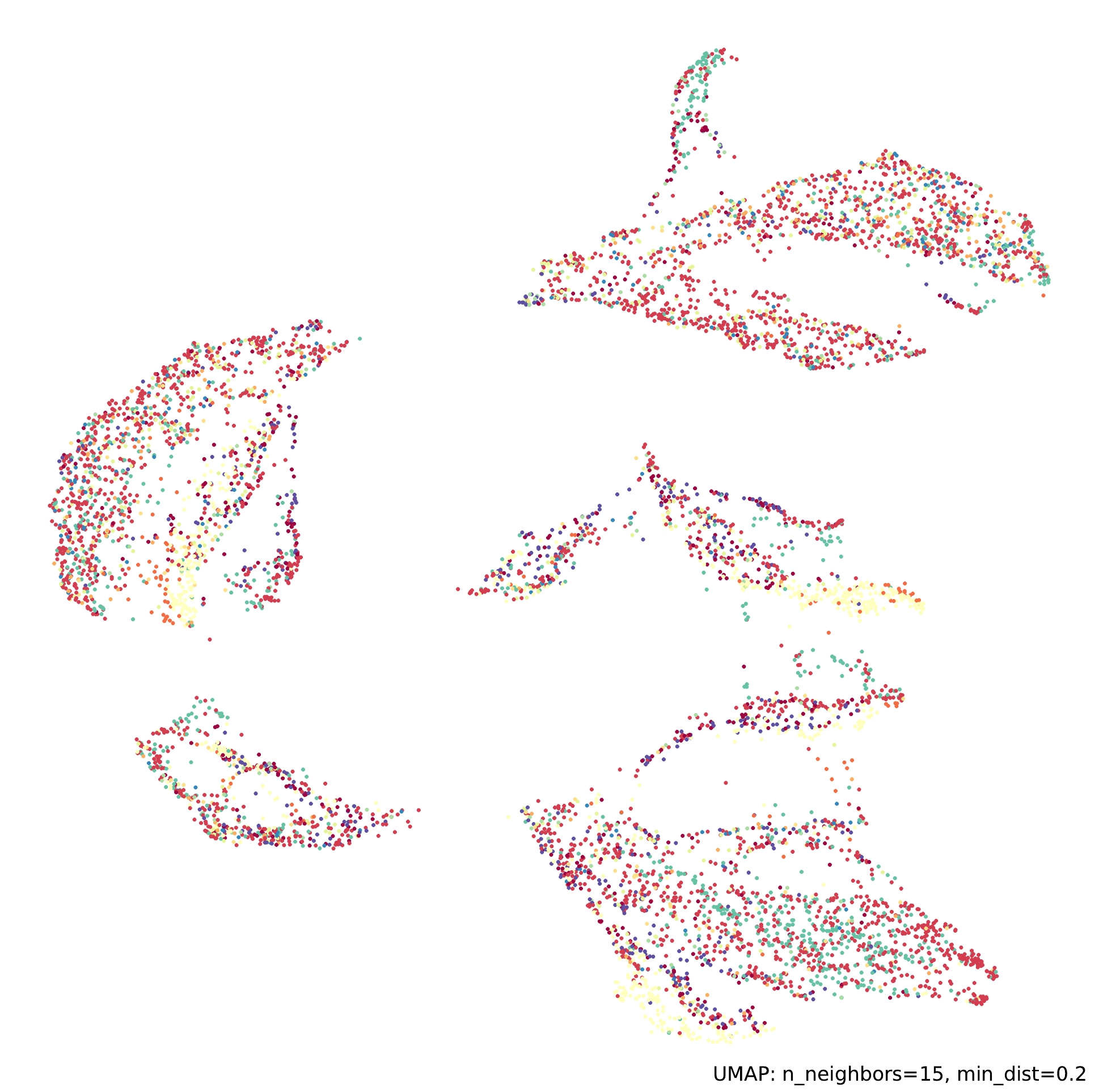}
    \caption{}
  \end{subfigure}
  \begin{subfigure}{0.32\linewidth}
    \includegraphics[width=\linewidth]{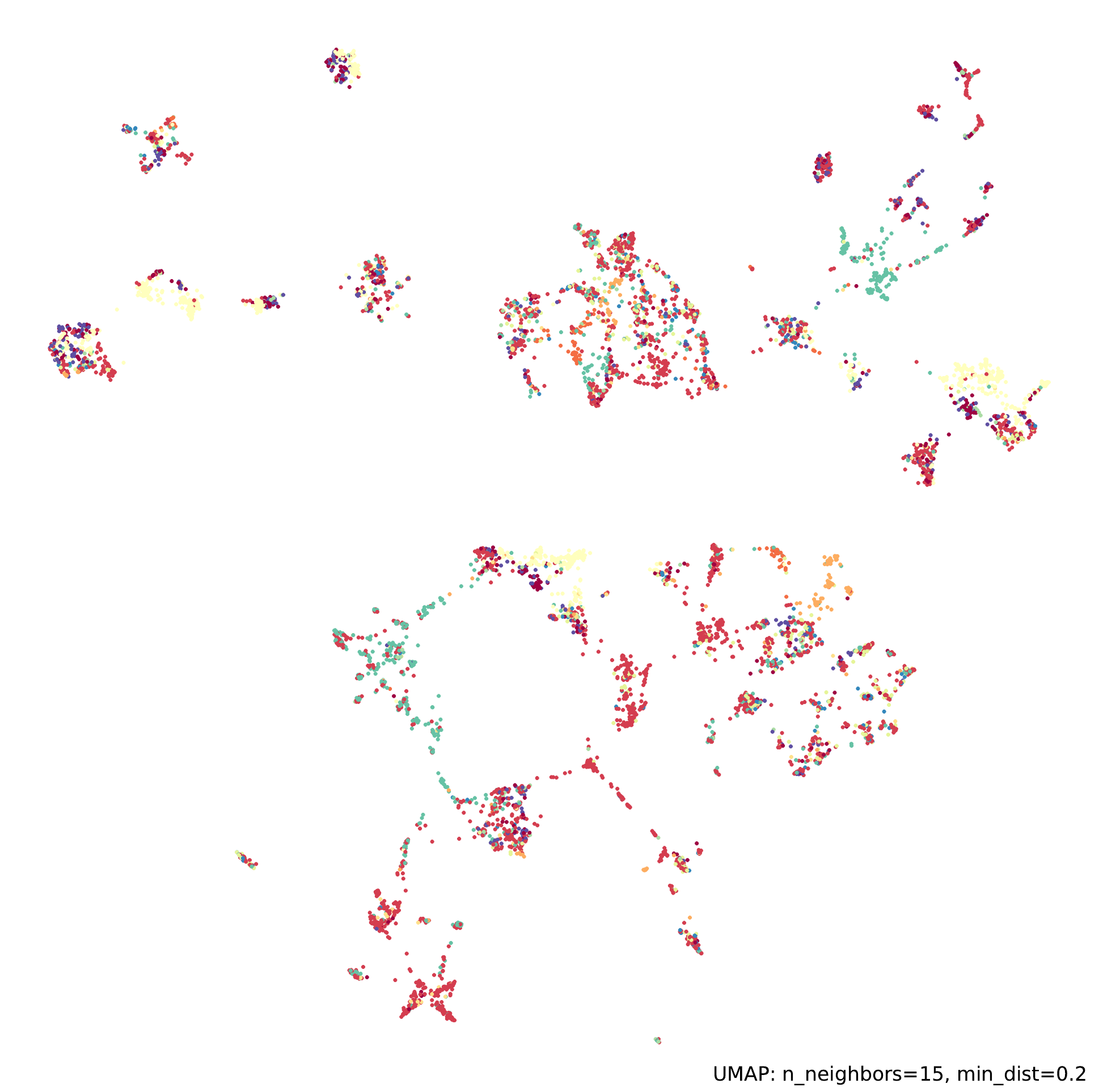}
    \caption{}
  \end{subfigure}
  \caption{UMAP embeddings of latent representations. Figures a-c show domain labeled embeddings for the SL, SL+RL and SL+RL with replay sampling, respectively.
  In the bottom row we find the same embeddings labeled by response type. Naively executing the RL stage of the training results in representations that are difficult to separate (b and e).
  When applying replay sampling we obtain higher specialization in the clusters (c and f).}
  \label{fig:umap-latent}
\end{figure*}

\subsection{Latent Space Analysis}\label{sec:latentanalysis}
\Figref{fig:umap-latent} depicts a UMAP\citep{mcinnes2018umap} projection of the learned latent samples $\rz$ for the Gaussian case.
We observe similar behavior to \citet{lubis:2020}.
In the supervised learning stage there is apparent clustering in terms of domain labels (\figref{fig:umap-a}).
The reinforcement learning stage with regularization leads to specialization of the clusters (\figref{fig:umap-c}).
In comparison, a good cluster separation is lost (\figref{fig:umap-b}) without applying regularization in the RL stage.
This can be explained by the fact that the $\rz$ samples are degenerate samples that lie in low-support regions of $\qp(\rz | \rc)$.

We have calculated the Cali{\'n}ski-Harabasz index \citep{calinski1974dendrite}, otherwise known as Variance Ratio Criterion, to evaluate the clusterings in the latent space with respect to domain and action type labelings taken from DAMD \citep{zhang2020task}.
In comparison to the results reported by \citet{lubis:2020}, our model is able to obtain high scores in the supervised stage of training already.
The scores drop slightly after RL fine-tuning, but generally remain at a much higher level than those reported for categorical latents.
It's important to note that the scores for LaRL$^*$ and LAVA$^*$ were computed with categorical latent variables, whereas our scores
are based on Gaussian latents, which are continuous and unbounded.

Consequently, the purpose of Table \ref{table:cluster} is to demonstrate the value in using continuous latent representations, rather than to make direct numeric comparisons.

\begin{table}
  \caption{Cali{\'n}ski-Harabasz scores (higher is better).}
  \centering
  \begin{tabular}{l | c | c | c | c | }
    \multirow{2}{*}{Model} & \multicolumn{2}{c}{SL} & \multicolumn{2}{c}{RL}  \\
        & Domain & Action   & Domain & Action \\
        \hline
        LaRL$^*$ & 93.19  &  23.20  &  121.15   &  17.5 \\
        \hline
        LAVA$^*$&  104.92 &   25.28  &  158.00 & 41.75  \\
        \hline
        \method{} &  345.48	&  80.76   &  329.39	& 78.95   \\
        \hline
      \end{tabular}
  \label{table:cluster}
\end{table}
\footnotetext[1]{Here we take the best scores of each method obtained by \citet{lubis:2020}}

\section{Related Work}\label{sec:relatedwork}
\noindent{}\textbf{Supervised Learning}
The majority of prior work applies some form of Supervised Learning.
For an extensive overview we refer to \citet{gao2018neural} and focus on
the state-of-the-art competitors HDSA~\cite{
chen2019semantically} as well as
UniConv~\cite{le2020uniconv}.
Both approaches demonstrate the benefits of
jointly training multiple dialogue tasks at once, such as predicting dialogue
acts and states.

\noindent{}\textbf{Variational Inference}
Several works employ variational inference for learning conditional response distributions.
VHRED~\cite{serban2017hierarchical} is a variational hierarchical RNN
architecture for modelling the dependencies between words and utterances.
Stochastic latent variables are introduced to generate the next utterance
in a conversation.
To avoid the problem of collapsing posteriors \citet{zhao2017learning} aim
to learn better response representation using an auxiliary task introducing
a bag of words prediction loss.
\citet{shen2018improving} improves the stability of VHRED by splitting the training
process into two parts, where first text is auto-encoded into continuous
embeddings, which serve as starting point for learning general latent
representations by reconstructing the encoded embedding.
To address the degeneration problem, Variational Hierarchical Conversation
RNNs (VHCR)~\citep{park2018hierarchical} exploits an utterance drop
regularization.
DialogWAE~\citep{gu2018dialogwae} represents the prior distribution as
Gaussian mixture and adapts WGAN~\citep{arjovsky2017wasserstein} for
training.

\noindent{}\textbf{Combined Supervised- \& Reinforcement Learning}
\citet{henderson2008hybrid} was the first work to combine Reinforcement and
Supervised Learning introducing a value function, which relies on SL for
predicting the expected future reward of states not covered by the data directly.
\citet{williams2017hybrid} prevent high variance by performing a SL gradient
update, if the RL
policy output deviates from the training data.
\citet{fatemi2016policy, su2017sample} apply two-stage approaches, where they
pretrain a dialogue policy supervised, which is then further
optimized by RL.
HappyBot~\citep{shin2019happybot} relies on a weighted combination of a
maximum likelihood and a REINFORCE objective.
\citet{saleh2020hierarchical} uses REINFORCE-based policy gradients to update
the prior probability distribution of the latent variational model trained using
supervised learning.
Structured Fusion Networks~(SFNs)~\citep{mehri2019structured} apply RL
to fuse dialogue modules, where each module serves a different purpose
(NLU, NLG, etc.) and is pretrained in individual supervised stages.
Apart from task-oriented dialogues a combination of SL and RL has also been
applied for generating responses for open-domain dialogues \citep{xu2018towards}.
The aforementioned approaches are based on word-level RL, suffering from huge
action-spaces covering the entire input vocabulary.
Due to this, ensuring coherent responses is challenging \citep{lewis2017deal, kottur2017natural}, especially in multi-turn dialogs spanning hundreds of words.
LaRL~\citep{zhao:2019} and LAVA~\citep{lubis:2020} address this problem by learning a low-dimensional latent representation
with  amortized variational inference followed by a RL fine-tuning stage where the decoder parameters are frozen and the encoder is
fine-tuned.

\section{Conclusion}
In this paper, we showed that with appropriate modifications to the reinforcement learning procedure and the latent space structure, it is possible to obtain state-of-the-art results in task-oriented dialogue with simple Gaussian latent distributions.
The problem of coherence deterioration when optimizing for success rate points to the fact
that better metrics are needed for developing efficient dialogue policies - we were able to alleviate this with optimal replay and KL regularization.

Although \method{} has shown promising results in the policy learning setting with access to ground-truth dialogue state, it would be of interest to see
how \method{} compares in the end-to-end learning setting.
Furthermore, the deterioration of the BLEU score is tightly connected to the capacity of the model to achieve good likelihood fits in the supervised learning stage.
Our experiments also hint that this variational formulation can benefit from higher-capacity models.
Along these lines, increasing capacity and making use of the transformer architecture is a promising future research direction.
\bibliography{bib}

\appendix

\def\comments{}

\section{Revisiting the Full ELBO}
\label{sec:app:math}

Further we use shorthand $q$ for $q(\rz | \rx, \rc)$
Firstly we define our optimization problem:
\begin{align*}
\min_q  \KL[ q || p(\rz | \rx, \rc)]
\end{align*}

\begin{figure}[H]
    \centering
    \begin{tikzpicture}
        \Vertex[x=0, label=$\rc$, color=white]{c}
        \Vertex[x=1.5, label=$\rz$, color=red]{z}
        \Vertex[x=3, label=$\rx$, color=green]{x}
        \Edge[Direct=true](c)(z)
        \Edge[Direct=true](z)(x)
    \end{tikzpicture}
    \caption{Probabilistic model of dependencies between $\rc$, $\rz$ and $\rx$}
    \label{fig:prob-model}
\end{figure}

\begin{lemma}
    Given the probabilistic model diagram \ref{fig:prob-model}:
    \begin{align*}
        &\KL[ q || p(\rz | \rx, \rc)] \le  \E_q[\log q] -\\&\E_q[\log p(\rx | \rz)] - \E_q[\log p(\rz | \rc)]
    \end{align*}
\end{lemma}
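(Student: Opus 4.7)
The plan is to turn the claimed bound into an exact identity plus a sign argument, using Bayes' rule on the true posterior under the conditional independence structure $\rc \to \rz \to \rx$ of \figref{fig:prob-model}. First I would expand $\KL[q \,||\, p(\rz|\rx,\rc)] = \E_q[\log q] - \E_q[\log p(\rz|\rx,\rc)]$ and apply Bayes' rule
$$p(\rz|\rx,\rc) = \frac{p(\rx|\rz,\rc)\,p(\rz|\rc)}{p(\rx|\rc)}.$$
The d-separation $\rx \perp \rc \mid \rz$ encoded by the graph gives $p(\rx|\rz,\rc) = p(\rx|\rz)$. Taking logs inside the expectation and pulling the $\rz$-independent term $\log p(\rx|\rc)$ out produces the exact identity
$$\KL[q \,||\, p(\rz|\rx,\rc)] = \E_q[\log q] - \E_q[\log p(\rx|\rz)] - \E_q[\log p(\rz|\rc)] + \log p(\rx|\rc).$$

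To convert this identity into the stated inequality, I would invoke that $\rx$ is a discrete delexicalized utterance, so $p(\rx|\rc)\in[0,1]$ and hence $\log p(\rx|\rc)\le 0$. Dropping this non-positive term from the right-hand side then yields the claimed bound. Equivalently, one can view the quantity $\E_q[\log q] - \E_q[\log p(\rx|\rz)] - \E_q[\log p(\rz|\rc)]$ as $\E_q[\log(q/p(\rx,\rz|\rc))]$ and apply Jensen's inequality in the form $-\log p(\rx|\rc) = -\log \E_q[p(\rx,\rz|\rc)/q] \le \E_q[-\log(p(\rx,\rz|\rc)/q)]$, which rearranges into exactly the same non-negative slack $-\log p(\rx|\rc)$ between the two sides of the identity.

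The only mildly delicate point is justifying the sign of the evidence term $\log p(\rx|\rc)$: in a fully continuous-$\rx$ setting it would not be automatic, but in the ToD setup here responses are discrete token sequences so $p(\rx|\rc)$ is a probability mass bounded by one. Everything else is a routine manipulation of log-factorizations under the assumed graphical model, so I anticipate no real obstacle beyond carefully tracking which quantities depend on $\rz$ (and thus stay inside the expectation with respect to $q$) and which do not (and can be pulled out).
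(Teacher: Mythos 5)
Your proof is correct and takes essentially the same route as the paper's: both expand the KL divergence via Bayes' rule under the conditional independence $\rx \perp \rc \mid \rz$ and then discard the non-positive evidence term, which you write as $\log p(\rx|\rc)$ and the paper writes as $\log p(\rx,\rc)-\log p(\rc)$ (the same quantity, your observation $p(\rx|\rc)\le 1$ being exactly the paper's condition $p(\rc)\ge p(\rx,\rc)$). One small caveat: your ``equivalently'' Jensen aside in fact only establishes $\KL[q\,||\,p(\rz|\rx,\rc)]\ge 0$ rather than the claimed inequality, but it is not load-bearing since your identity-plus-sign argument already suffices.
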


\begin{proof}

We expand the KL divergence of \eqref{eq:min-kl}.
\begin{align}\label{eq:min-kl}
\KL[ q || p(\rz | \rx, \rc)] = \E_q [\log q - \log p(\rz | \rx, \rc)]
\end{align}

\newcommand{\ind}{\perp\!\!\!\!\perp}
We first expand the $p(\rz | \rx, \rc)$ by Bayes rule and make use of independence between $x$ and $c$:
\begin{align*}
    & p(\rz | \rx, \rc)  = \frac{p(\rx, \rc | \rz) p(\rz)}{p(\rx,\rc)}\\
    &= \frac{p(\rx | \rz) p(\rc | \rz) p(\rz)}{p(\rx,\rc)}, \;  \rx \ind \rc |\, \rz\\
    &= \frac{p(\rx | \rz)p(\rz | \rc) p(\rc)}{p(\rx,\rc)}
\end{align*}
Putting back these terms into \eqref{eq:min-kl}:
\begin{align*}
    &\KL[ q || p(\rz | \rx, \rc)] = \\
    &\E_q[\log q] -\E_q[\log p(\rx | \rz)] - \E_q[\log p(\rz | \rc)] \\
    &- \E_q[\log p(\rc)]  + \E[\log p(\rx,\rc)] \\
    & = \E_q[\log q] -\E_q[\log p(\rx | \rz)] - \E_q[\log p(\rz | \rc)] \\
    &- \log p(\rc) + \log p(\rx,\rc) \\
    & \le \E_q[\log q] -\E_q[\log p(\rx | \rz)] \\
    & - \E_q[\log p(\rz | \rc)] \, p(\rc) \ge p(\rx,\rc)
\end{align*}

\end{proof}

In practice, we don't have access to $p(\rz|\rc)$ and hence we estimate it with $q^p_\theta(\rz | \rc)$, which
results in an $\argmin$ operation within the expectation.
\def\qp{{q^p_\zeta}}
\def\q{{q_\theta}}
\def\p{{p_\phi}}

\begin{align*}
    &\E_q[\log q] -\E_q[\log p(\rx | \rz)] - \E_q[\log p(\rz | \rc)] = \\
    & \E_q[\log q] -\E_q[\log \argmin_{q^p}\KL[q^p || p(\rz | \rc)]] \\&- \E_q[\log p(\rx | \rz)]
\end{align*}
The new KL term within the expectation can be expanded to obtain the traditional evidence lower-bound, ie. bound on the objective:
\begin{align*}
    &\KL[q^p || p(\rz | \rc)] \le \KL[\qp || p(\rz)] \\&+ \E_\qp[p(\rc | \rz)]
\end{align*}

In the end, this is our objective function, but we want to get rid of $\p( \rc | \rz)$.
\begin{align*}
    \gL(\phi, \theta, \zeta) &= \KL[ q_\theta || \qp] + \KL[\qp || p(\rz)] - \\
    &\E_\q[\log \p(\rx | \rz )] - \E_\qp[\log p(\rc | \rz)]
\end{align*}

The current formulation would require us to approximate $p(\rc | \rz)$.
\begin{remark}
    The KL term $ \KL[ p(\rz | \rc) || p(\rz | \rx,\rc)] $ is zero iff  $\rx$ doesn't provide any additional information
    about $\rz$.
\end{remark}

We further simplify the problem by replacing $p(\rc | \rz)$ with $\p(\rx | \rz)$.
By applying Bayes rule, we arrive to the following:
\begin{align*}
    \gL(\phi, \theta, \zeta) &= \KL[ q_\theta || \qp] + \KL[\qp || p(\rz)] - \\
    &\E_\q[\log \p(\rx | \rz )] - \E_\qp[\log \p(\rx | \rz)] \\
    & - \E_\qp[\log \frac{p(\rc| \rz)}{p_\theta(\rx | \rz)}]
\end{align*}

\begin{lemma}\label{lemma-approx}
    $p(z|x,c) \approx E_{p(c)}[p(z|x,c)]$ implies that $p(c | z) \approx \frac{p(c)}{p(x)} p(x|z) $
\end{lemma}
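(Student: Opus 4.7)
The plan is to reduce the claimed approximation to a simpler statement about conditional distributions and then derive that statement from the hypothesis using the graphical model $\rc \to \rz \to \rx$ of Figure~\ref{fig:prob-model}. First I would apply Bayes' rule to both sides of the target relation: the left-hand side factors as $p(\rc\mid\rz) = p(\rz\mid\rc)\,p(\rc)/p(\rz)$, while the right-hand side rewrites as $\frac{p(\rc)}{p(\rx)}\,p(\rx\mid\rz) = \frac{p(\rc)\,p(\rz\mid\rx)}{p(\rz)}$ using $p(\rx\mid\rz) = p(\rz\mid\rx)\,p(\rx)/p(\rz)$. Thus the claim reduces to the single approximation $p(\rz\mid\rc) \approx p(\rz\mid\rx)$, i.e.\ $\rz$ carries approximately the same information about $\rc$ as it does about $\rx$.

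Next I would unpack the hypothesis. Since $\E_{p(\rc)}[p(\rz\mid\rx,\rc)]$ is a function of only $\rx$ and $\rz$, the approximation $p(\rz\mid\rx,\rc) \approx \E_{p(\rc)}[p(\rz\mid\rx,\rc)]$ says that $p(\rz\mid\rx,\rc)$ is essentially independent of $\rc$; combined with the identity $p(\rz\mid\rx) = \E_{p(\rc\mid\rx)}[p(\rz\mid\rx,\rc)]$ this yields $p(\rz\mid\rx,\rc) \approx p(\rz\mid\rx)$. I would then exploit the Markov structure $\rc \to \rz \to \rx$ (so that $\rx \perp \rc \mid \rz$) to write $p(\rz\mid\rx,\rc) = p(\rx\mid\rz)\,p(\rz\mid\rc)/p(\rx\mid\rc)$, and equating with $p(\rz\mid\rx) = p(\rx\mid\rz)\,p(\rz)/p(\rx)$ gives $p(\rz\mid\rc)/p(\rx\mid\rc) \approx p(\rz)/p(\rx)$. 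Multiplying both sides by $p(\rc)/p(\rz)$ and simplifying produces $p(\rc\mid\rz) \approx p(\rc\mid\rx) = p(\rx\mid\rc)\,p(\rc)/p(\rx)$.

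The main obstacle is that this clean chain stops one equality short of the target: it gives $p(\rc\mid\rz) \approx p(\rx\mid\rc)\,p(\rc)/p(\rx)$, whereas the lemma has $p(\rx\mid\rz)$ in place of $p(\rx\mid\rc)$. Closing the remaining gap requires the further heuristic $p(\rx\mid\rc) \approx p(\rx\mid\rz)$, which I would justify by appealing to the role of $\rz$ as a learned representation that compresses exactly those features of $\rc$ needed to generate $\rx$, so that $\rz$ acts as an approximate sufficient statistic for $\rx$ given $\rc$. This is essentially the spirit of the remark stated earlier ($\KL[p(\rz\mid\rc)\,||\,p(\rz\mid\rx,\rc)]=0$ iff $\rx$ supplies no information about $\rz$ beyond $\rc$), and combining the two approximations is what allows the intractable cross-term $\E_{q^p_\zeta}\bigl[\log\frac{p(\rc\mid\rz)}{p_\phi(\rx\mid\rz)}\bigr]$ in the objective to collapse to a quantity that does not depend on $\rz$ and can therefore be dropped.
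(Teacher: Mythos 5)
Your argument is correct to the same standard as the paper's and lands in the same place, but it is organized differently and is more honest about where the reasoning is heuristic. The paper works directly with the ratio $p(c|z)/p(x|z) = p(c)\,p(z|c)\,/\,(p(x)\,p(z|x))$ and then silently substitutes $p(z|c) \to p(z|x,c)$ in the numerator and $p(z|x) \to \E_{p(c)}[p(z|x,c)]$ in the denominator, after which the stated hypothesis cancels the two; the numerator substitution is acknowledged only implicitly, via the preceding Remark. You instead reduce the target to $p(z|c)\approx p(z|x)$, extract $p(z|x,c)\approx p(z|x)$ from the hypothesis, and push it through the Markov factorization $p(z|x,c)=p(x|z)\,p(z|c)/p(x|c)$, which leaves you one step short: you need $p(x|c)\approx p(x|z)$. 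That residual step is exactly equivalent, given the conditional independence of $x$ and $c$ given $z$, to the paper's unflagged substitution $p(z|c)\approx p(z|x,c)$ --- indeed $p(z|x,c)/p(z|c) = p(x|z)/p(x|c)$ --- so the ``gap'' you identify is not a defect of your proof relative to the paper's; it is the same approximate-sufficiency-of-$z$ assumption, which the paper also justifies only by the informal Remark. The net effect is that your version makes visible that the lemma really rests on two approximations (the stated hypothesis plus sufficiency of $z$ for $x$ given $c$), whereas the paper's chain makes it look as though the stated hypothesis alone suffices.
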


\begin{proof}
    \begin{align*}
        & p(z|x,c) \approx E_{p(c)}[p(z|x,c)] \\
        & p(c|z) = p(z|c)p(c)/ p(z) \\
        & p(x|z) = p(z|x)p(x)/ p(z) \\
        & \frac{p(c|z)}{p(x|z)} \approx \frac{p(c) p(z|c,x)}{p(x)\E_{p(c)}[p(z|x,c)]} \\
        & \frac{p(c|z)}{p(x|z)} \approx \frac{p(c)}{p(x)}
    \end{align*}

\end{proof}

\begin{proposition}
    From the above it follows that for the optimization problem $\min_q  \KL[ q || p(\rz | \rx, \rc)]$ it suffices to
    optimize the following objective:
    \begin{align*}
        \gL(\phi, \theta, \zeta) &= \KL[ q_\theta || \qp] + \KL[\qp || p(\rz)] - \\
        &\E_\q[\log \p(\rx | \rz )] - \E_\qp[\log \p(\rx | \rz)] + C
    \end{align*}
\end{proposition}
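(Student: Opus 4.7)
The plan is to chain together the three ingredients already established in this appendix and absorb the residual term into the constant $C$. I would start from the bound derived in the preceding Lemma, namely
\begin{align*}
\KL[q \,\|\, p(\rz|\rx,\rc)] \le \E_q[\log q] - \E_q[\log p(\rx|\rz)] - \E_q[\log p(\rz|\rc)],
\end{align*}
and then replace the intractable $p(\rz|\rc)$ by the parametric approximation $\qp(\rz|\rc)$. Using the standard identity $\E_q[\log p(\rz|\rc)] = \E_q[\log \qp] - \KL[q \,\|\, \qp] + \KL[q \,\|\, p(\rz|\rc)]$ together with the ELBO bound $-\E_\qp[\log p(\rz|\rc)] \le \KL[\qp \,\|\, p(\rz)] - \E_\qp[\log p(\rc|\rz)] + \log p(\rc)$ (exactly the step the appendix uses to transition from the bound to the intermediate objective), I would recover the objective
\begin{align*}
\KL[q_\theta \,\|\, \qp] + \KL[\qp \,\|\, p(\rz)] - \E_\q[\log \p(\rx|\rz)] - \E_\qp[\log p(\rc|\rz)],
\end{align*}
modulo terms independent of $\phi,\theta,\zeta$.

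The next step is to swap the inconvenient $\log p(\rc|\rz)$ for $\log \p(\rx|\rz)$. I would write
\begin{align*}
-\E_\qp[\log p(\rc|\rz)] = -\E_\qp[\log \p(\rx|\rz)] - \E_\qp\!\left[\log \frac{p(\rc|\rz)}{\p(\rx|\rz)}\right],
\end{align*}
so the only thing to control is the last expectation. This is where Lemma \ref{lemma-approx} does the work: under its hypothesis $p(\rz|\rx,\rc) \approx \E_{p(\rc)}[p(\rz|\rx,\rc)]$, the ratio $p(\rc|\rz)/\p(\rx|\rz)$ is approximately $p(\rc)/p(\rx)$, which does not depend on $\rz$. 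Pulling it out of the expectation turns the residual into $\log p(\rc) - \log p(\rx)$, a quantity that depends only on the data and not on $\phi,\theta,\zeta$; I would collect this together with the $\log p(\rc)$ term from the ELBO step and with the data-dependent $\log p(\rx,\rc)$ that dropped out of the Lemma into a single constant $C$.

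Finally, I would combine the pieces: the bound from the Lemma plus the prior-substitution step plus the $p(\rc|\rz)\mapsto \p(\rx|\rz)$ swap gives $\KL[q \,\|\, p(\rz|\rx,\rc)] \le \gL(\phi,\theta,\zeta) + C$ up to the approximation in Lemma \ref{lemma-approx}, so minimizing $\gL$ suffices for the original optimization problem. The main obstacle I expect is not any computation but the honest accounting of what goes into $C$: the constant collects a data-dependent $\log p(\rx,\rc)$, a $\log p(\rc)$ from the ELBO inequality, and the approximate $\log(p(\rc)/p(\rx))$ from Lemma \ref{lemma-approx}, and one has to verify that each term is indeed free of the optimization variables $\phi,\theta,\zeta$ so that dropping them does not change the $\argmin$. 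The statement is therefore best read as an approximate equivalence whose tightness is controlled by the quality of the conditional-independence-like assumption in Lemma \ref{lemma-approx}.
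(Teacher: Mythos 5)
Your proposal follows essentially the same route as the paper: the first Lemma's bound, substitution of the parametric conditional prior $\qp$ for $p(\rz\mid\rc)$ via the ELBO expansion, and Lemma~\ref{lemma-approx} to reduce the residual $-\E_{\qp}[\log(p(\rc\mid\rz)/\p(\rx\mid\rz))]$ to a parameter-free constant absorbed into $C$; your accounting of which data-dependent terms enter $C$ is if anything more explicit than the paper's. One small slip: the ``standard identity'' you quote should read $\E_q[\log p(\rz\mid\rc)] = \E_q[\log \qp] + \KL[q\,\|\,\qp] - \KL[q\,\|\,p(\rz\mid\rc)]$ (both KL signs are flipped in your version), though this does not propagate, since the intermediate objective you arrive at matches the paper's.
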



How accurate the approximation is depends highly on on the relationship between $p(\rz | \rx)$ and $p(\rz | \rc)$.

\begin{lemma}
    In the case where $p(\rz | \rx) = p(\rz | \rc)$, the approximation from \ref{lemma-approx} holds with equality.
\end{lemma}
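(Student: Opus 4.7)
The plan is to bypass the approximation machinery entirely and obtain the claim by a direct Bayes-rule manipulation. Under the hypothesis $p(\rz|\rx) = p(\rz|\rc)$, the ratio $p(\rc|\rz)/p(\rx|\rz)$ collapses exactly to $p(\rc)/p(\rx)$, so the conclusion of Lemma~\ref{lemma-approx} tightens from an approximation to a genuine equality.

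First, I would apply Bayes' rule to each of the two posteriors separately,
\begin{align*}
p(\rx \mid \rz) = \frac{p(\rz \mid \rx)\, p(\rx)}{p(\rz)}, \qquad p(\rc \mid \rz) = \frac{p(\rz \mid \rc)\, p(\rc)}{p(\rz)}.
\end{align*}
Next, I would form the ratio of the second equation to the first. By the hypothesis the two conditional densities $p(\rz|\rx)$ and $p(\rz|\rc)$ are identical and cancel, together with the common factor $1/p(\rz)$, leaving $p(\rc \mid \rz)/p(\rx \mid \rz) = p(\rc)/p(\rx)$. Rearranging yields $p(\rc \mid \rz) = \tfrac{p(\rc)}{p(\rx)}\, p(\rx \mid \rz)$, which is precisely the conclusion of Lemma~\ref{lemma-approx} with equality in place of $\approx$.

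Since the computation itself is a one-line ratio manipulation, the main obstacle is interpretive rather than technical: one should check that the premise of Lemma~\ref{lemma-approx}, namely $p(\rz \mid \rx, \rc) \approx \E_{p(\rc)}[p(\rz \mid \rx, \rc)]$, is itself tight in this regime, so that the entire implication chain is an equality chain. Invoking the graphical model of Fig.~\ref{fig:prob-model}, which gives $\rx \perp \rc \mid \rz$, together with the hypothesis $p(\rz \mid \rx) = p(\rz \mid \rc)$, I would argue that $p(\rz \mid \rx, \rc)$ depends on $\rc$ only through factors that cancel after normalization, so that $p(\rz \mid \rx, \rc) = p(\rz \mid \rx)$ and the expectation over $p(\rc)$ reproduces the same posterior. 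With that consistency check recorded, the two-line Bayes derivation above closes the proof.
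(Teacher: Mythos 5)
Your two-line Bayes-rule computation is exactly the paper's intended argument: the proof of Lemma~\ref{lemma-approx} already forms the ratio $p(\rc|\rz)/p(\rx|\rz) = p(\rz|\rc)p(\rc)/\bigl(p(\rz|\rx)p(\rx)\bigr)$ via exact applications of Bayes' rule, and substituting the hypothesis $p(\rz|\rx)=p(\rz|\rc)$ turns the final cancellation into an equality, which is precisely what the paper means by ``replacing $p(\rz|\rx)$ and $p(\rz|\rc)$ in the proof of Lemma~\ref{lemma-approx} without the approximation step.'' The consistency check in your last paragraph is not needed for the statement (which asserts equality only in the \emph{conclusion} of Lemma~\ref{lemma-approx}), and its claim that $p(\rz|\rx,\rc)=p(\rz|\rx)$ does not follow from the stated hypotheses, so I would drop it.
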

\begin{proof}
    This can be easily seen by replacing the $p(\rz | \rx)$ and $p(\rz | \rc)$ in the proof of Lemma \ref{lemma-approx},
    we arrive to the same result without making the approximation step.
\end{proof}
Intuitively,  $p(\rz | \rx)$ and $p(\rz | \rc)$ in the ideal case would be equal, i.e. knowing either $\rx$ or $\rc$ would be enough to obtain all the information
about $\rz$, which allows for good information compression.

In our approach, we optimize $\E_\qp[\p(\rx | \rz)]$ and $\E_\q[\p(\rx | \rz )]$ by interchangebly sampling from $\p$ and $\qp$ based on a
fair coin flip, ie Bernoulli distribution and a assume an identity covariance Gaussian prior $p(\rz)$.

\section{Evaluation Histograms}
\label{app:histograms}

In \figref{app:fig:eval-hist} we show histograms of different evaluation metrics for the MultiWOZ 2.1 dataset and different approaches.
We ran 3 seeds for each of the methods.
Most notably, we notice that we are able to obtain the highest BLEU score in the supervised learning stage by utilizing prior and posterior shuffling, denoted as \emph{sl shuff}, with
the result slightly increasing when using an additonal identity covariance Gaussian prior on the prior (\emph{sl pp shuff}).

\begin{figure*}
    \newlength{\asubfig}
    \setlength{\asubfig}{{0.3\linewidth}}
    \begin{subfigure}{\asubfig}
        \includegraphics[width=\linewidth]{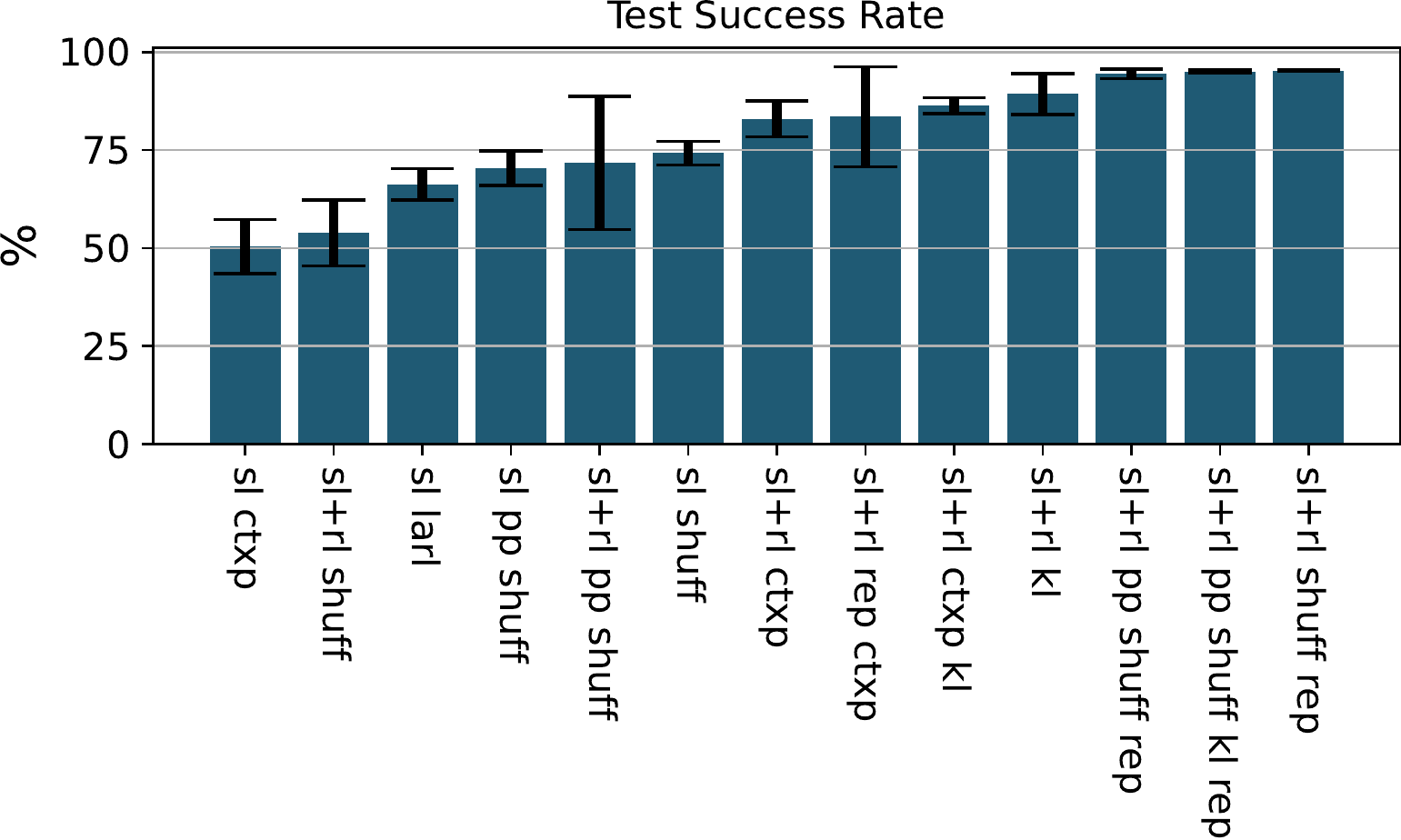}
        \caption{}
    \end{subfigure}
    \begin{subfigure}{\asubfig}
        \includegraphics[width=\linewidth]{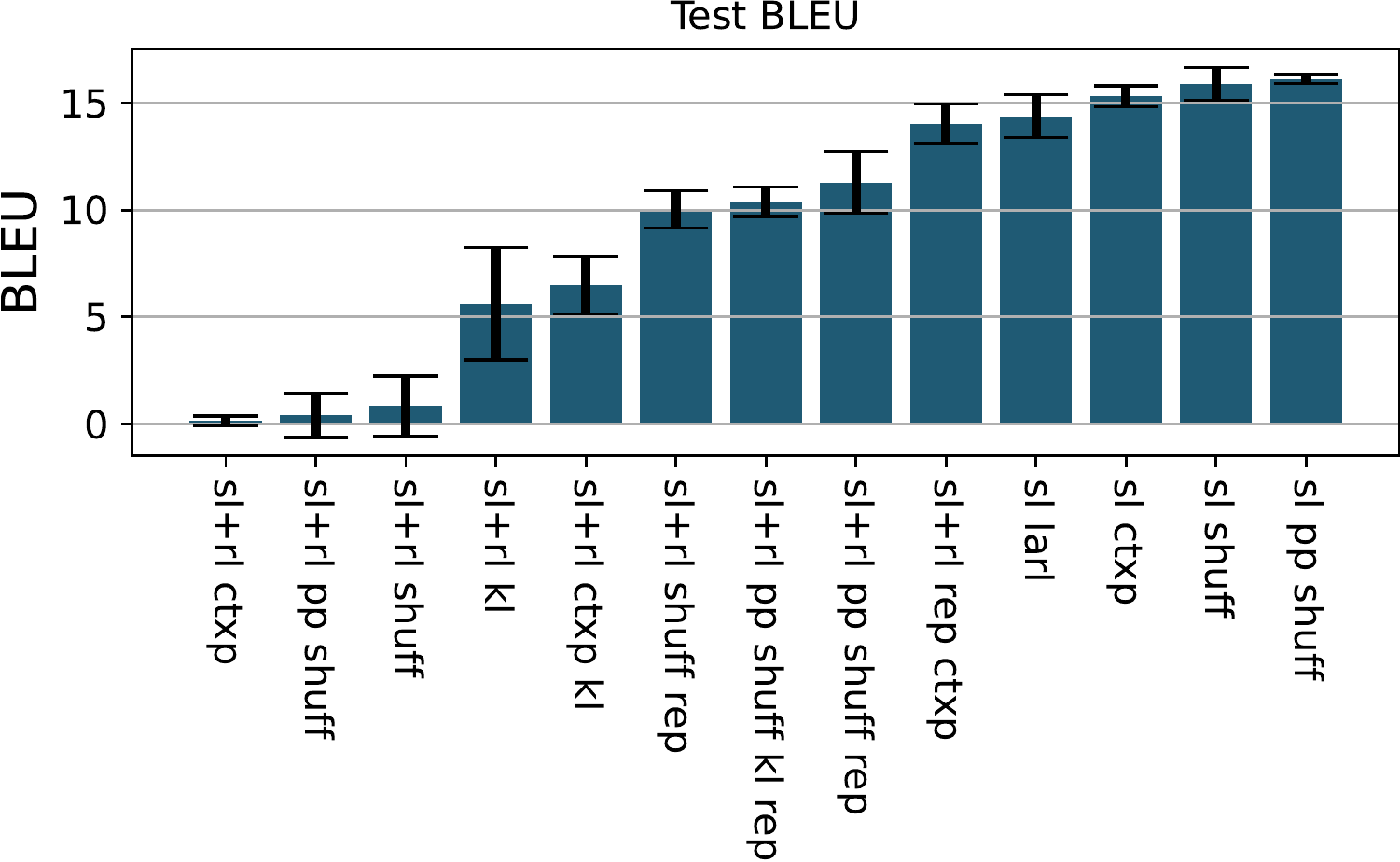}
        \caption{}
    \end{subfigure}
    \begin{subfigure}{\asubfig}
        \includegraphics[width=\linewidth]{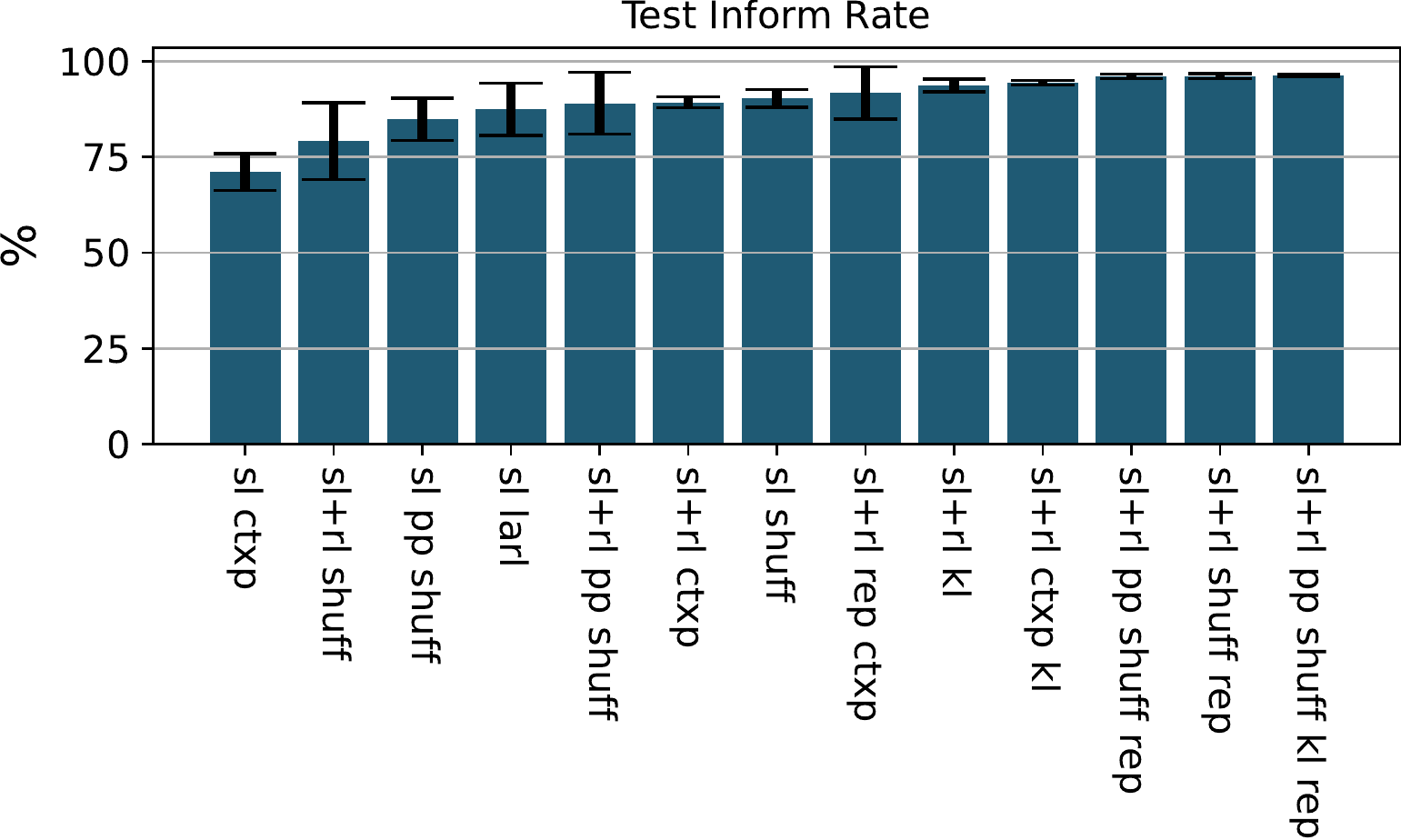}
        \caption{}
    \end{subfigure}

    \begin{subfigure}{\asubfig}
        \includegraphics[width=\linewidth]{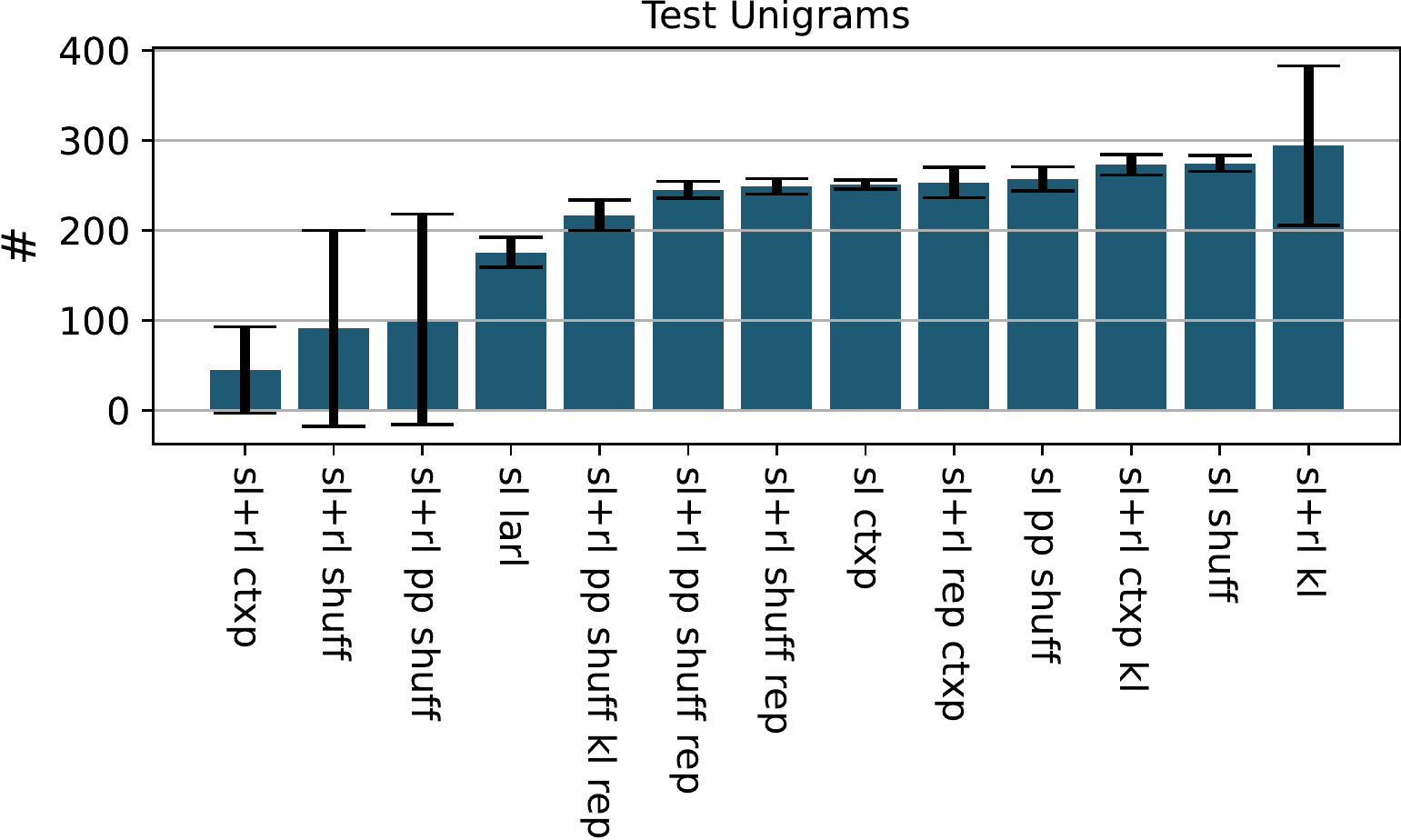}
        \caption{}

    \end{subfigure}
    \begin{subfigure}{\asubfig}
        \includegraphics[width=\linewidth]{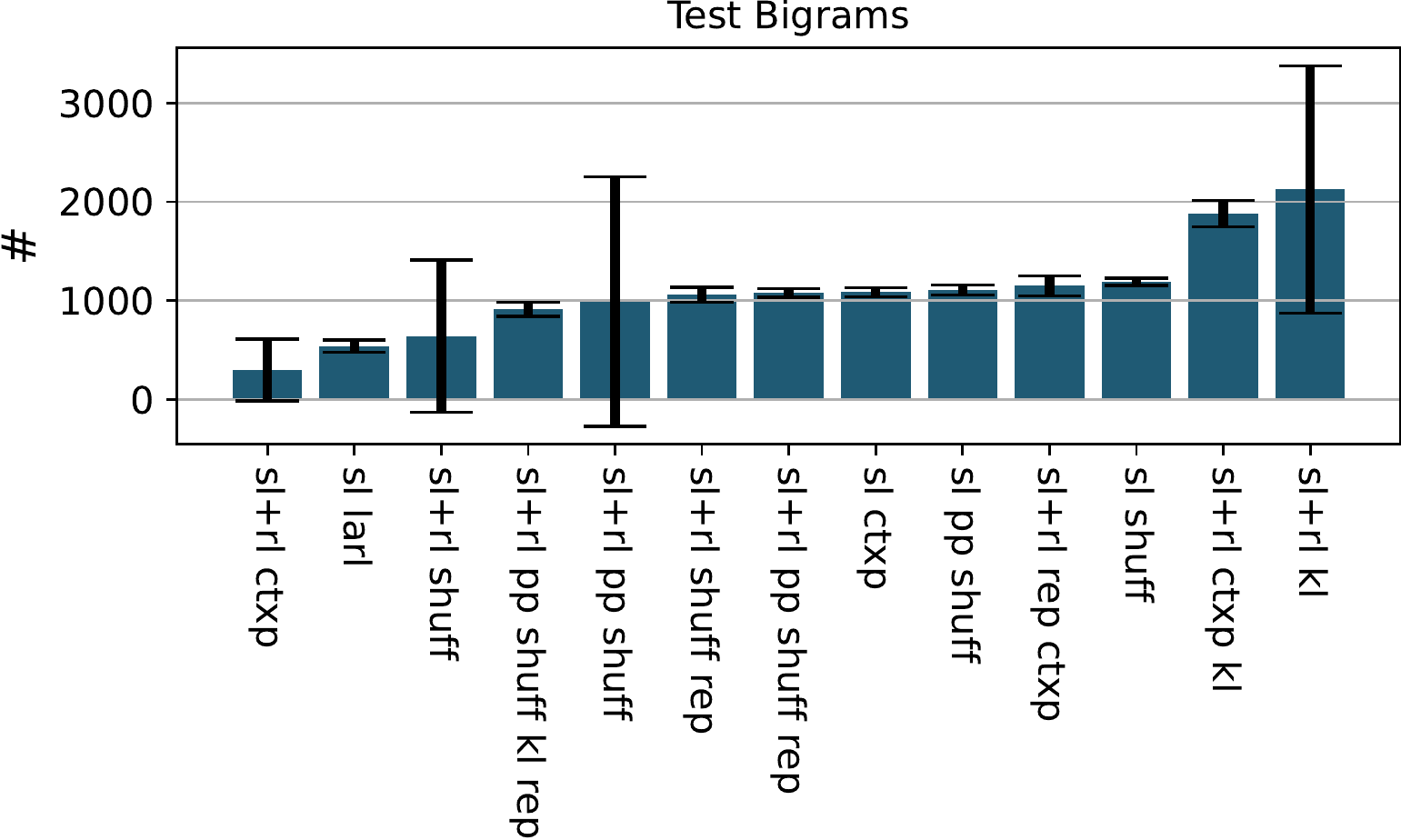}
        \caption{}

    \end{subfigure}
    \begin{subfigure}{\asubfig}
        \includegraphics[width=\linewidth]{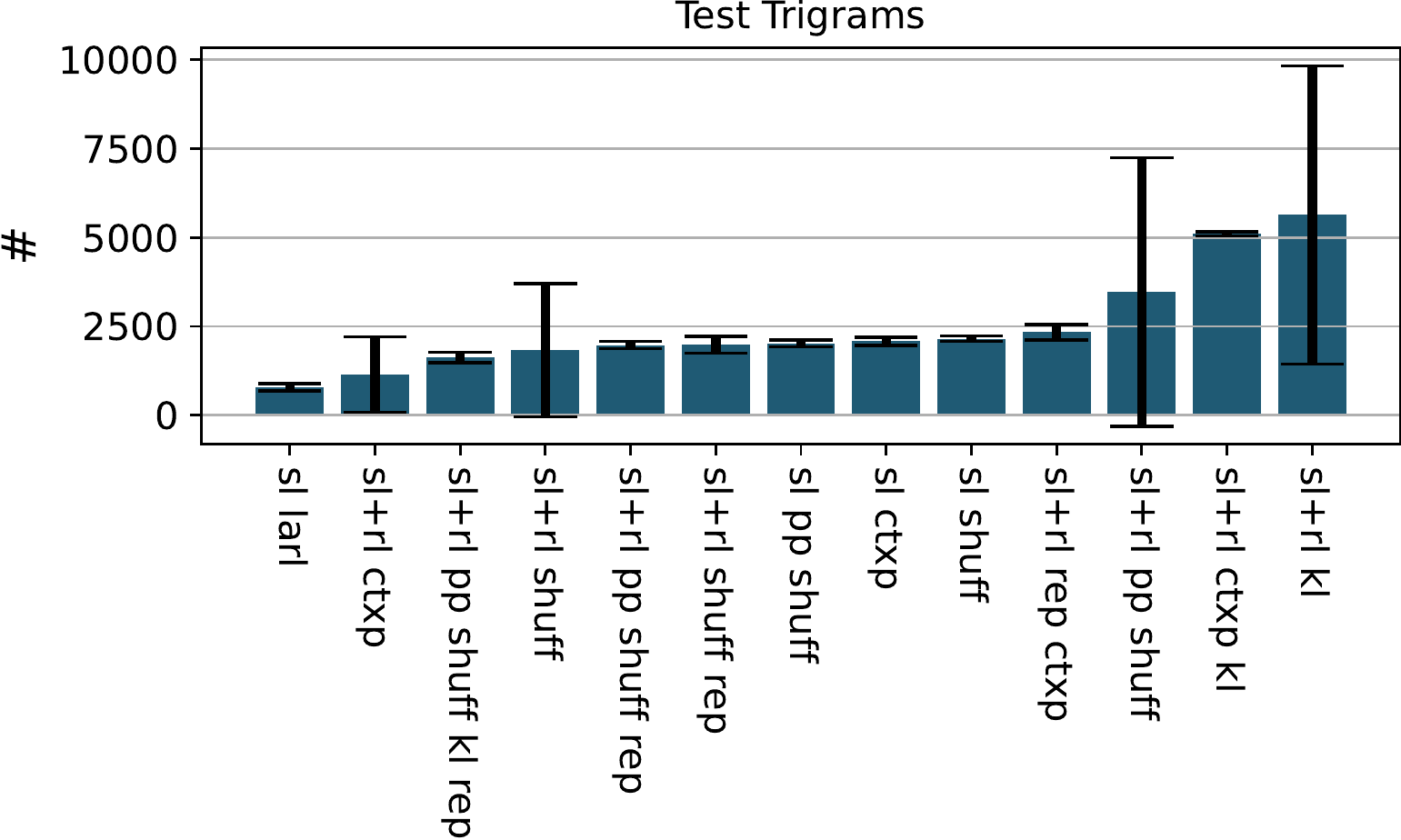}
        \caption{}

    \end{subfigure}

    \caption{
        Here we show results for different metrics on MultiWOZ 2.1 for different approaches denoted on the x-axis.
        \emph{sl} denotes supervised trainig, \emph{sl+rl} denotes supervised training followed by reinforcement learning, \emph{ctxp}
        usage of contextual priror, but without any prior on it, \emph{kl} the addition of KL penalty to identity covariance Gaussian,
        \emph{larl} the approach proposed by \citet{zhao:2019}, \emph{shuff} prior and posterior shuffling in the supervised stage, \emph{pp}
        identity covariance Gaussian prior on the contextual prior, \emph{rep} usage of replay buffer for the reinforcement learning fine-tuning stage of training.
    }
    \label{app:fig:eval-hist}
\end{figure*}

\begin{table*}[!th]
  \centering
  \begin{tabular}{ | l | p{12cm} |}
    \hline
    USER & hi . i was wondering if there are any eastern [value\_food] type restaurant -s in the [value\_area] part of the city?\\
    \hline
    SYSTEM & [restaurant\_name] is a [value\_food] restaurant in the [value\_area] of town . it is [value\_pricerange] -ly priced and located at [restaurant\_address]. would you like to book a table? \\
    \hline
    USER & i am looking for an [value\_pricerange] eastern [value\_food] place in the [value\_area] \\
    \hline
    SYSTEM & [restaurant\_name] is a [value\_pricerange] [value\_food] restaurant in the [value\_area] of town . would you like me to book a table for you? \\
    \hline
    USER & how about a [value\_food] restaurant? \\
    \hline
    SYSTEM & i have [value\_count] [value\_pricerange] [value\_food] restaurant -s in the [value\_area] [restaurant\_name], [restaurant\_name], [restaurant\_name], [restaurant\_name], [restaurant\_name], [restaurant\_name], [restaurant\_name], [restaurant\_name], [restaurant\_name], [restaurant\_name], [restaurant\_name], [restaurant\_name], [restaurant\_name], [restaurant\_name], [restaurant\_name], [restaurant\_name], [restaurant\_name], [restaurant\_name], [restaurant\_name], [restaurant\_name] \\
    \hline
    USER & i would like a reservation for [value\_count] to the [restaurant\_name] restaurant.\\
    \hline
    SYSTEM & i have booked you a table for [value\_count] at [restaurant\_name] on [value\_day] at [value\_time]. your reference number is [restaurant\_reference].\\
    \hline
    USER & i would like to make a reservation for [value\_day] at [value\_time] . and there has been a change in plans , i will be dining alone.\\
    \hline
    SYSTEM & your booking was successful . the table will be reserved for [value\_count] minutes . reference number is [restaurant\_reference]. \\
    \hline
    USER & can i get the reference number for that reservation ? \\
    \hline
    SYSTEM & your booking was successful . the table will be reserved for [value\_count] minutes . reference number is [restaurant\_reference]. \\
    \hline
    USER & excellent . that is everything! \\
    \hline
    SYSTEM & you are welcome . have a great day!\\
    \hline
  \end{tabular}
  \caption{Example of resulting delexicalized dialogue outputted by learned dialogue policy learned with optimal replay.}
  \label{app:tb:dialogue}
\end{table*}

\section{LAVA Gauss Results}
\label{app:sec:lava-gauss}

To obtain scores with the newest evaluation method for MultiWOZ, we ran the code available at \url{https://gitlab.cs.uni-duesseldorf.de/general/dsml/lava-public}.
We didn't make any modification to the hyperparameters and ran the reinforcement learning stage for 10 epochs.
Notably, the LAVA method with continuous latent exibits high variance in the success scores, with a standard deviation of $3.32$ and mean success rate $73.33$ across 3 runs,
it is able to keep the BLEU score high since it's under-optimizing the success metric and is trained for few epochs which prevents divergence.
In fact, \method{} can obtain the same BLEU score by sacrificing a bit on the success rate side, see \figref{fig:pareto-front}.

\section{Regularization Sensitivity}

In \figref{app:fig:regularization-sensitivity} we do a regularization sensitivity analysis for different scales of
KL regularization and fraction of optimal replay in the reinforcement learning training stage.
We do the same for the categorical latent variable case in \figref{app:fig:categorical-gs}.

\begin{figure*}
    \newlength{\bsubfig}
    \setlength{\bsubfig}{{0.3\linewidth}}
    \begin{subfigure}{\bsubfig}
        \includegraphics[width=\linewidth]{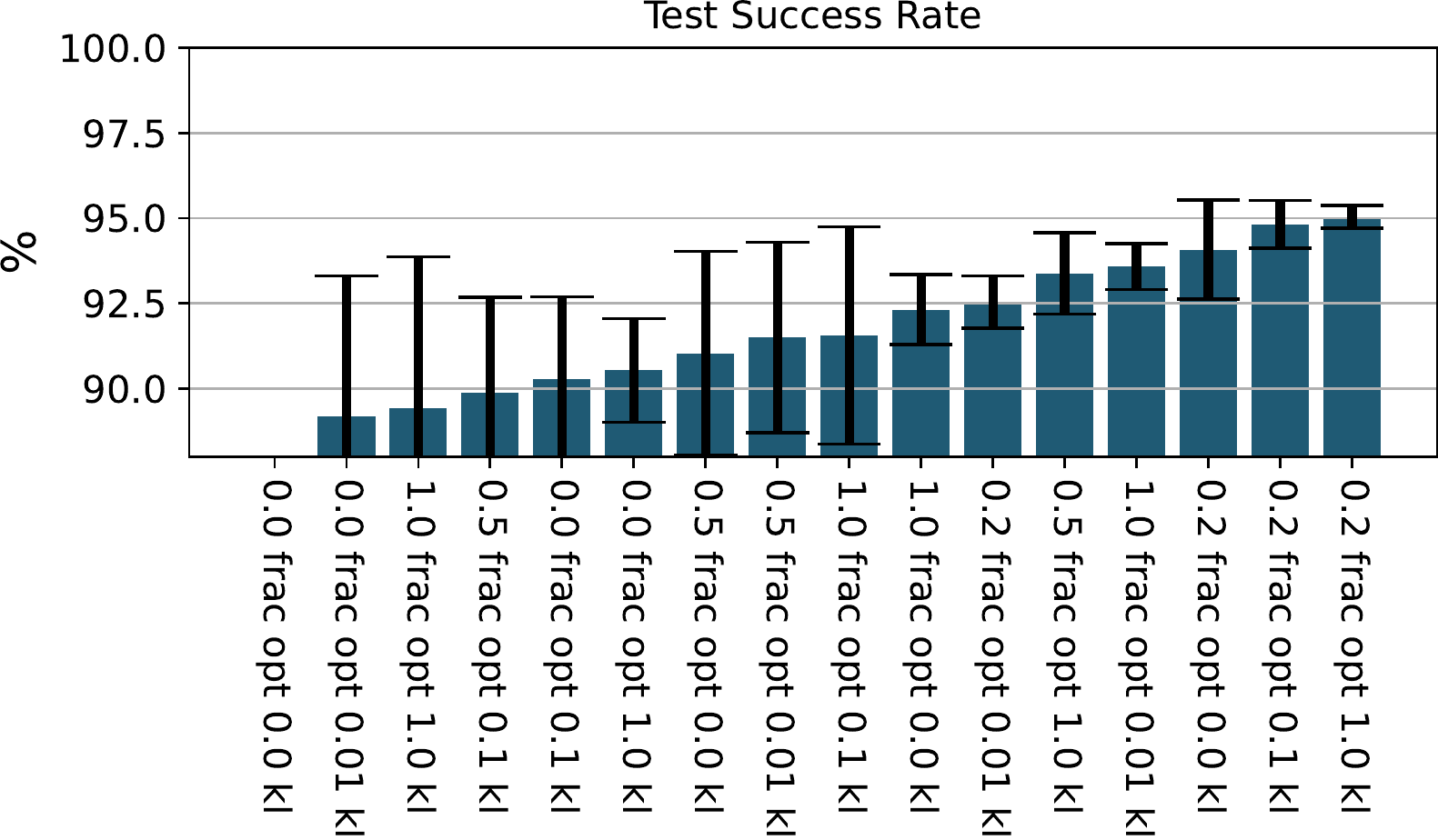}
        \caption{}
    \end{subfigure}
    \begin{subfigure}{\bsubfig}
        \includegraphics[width=\linewidth]{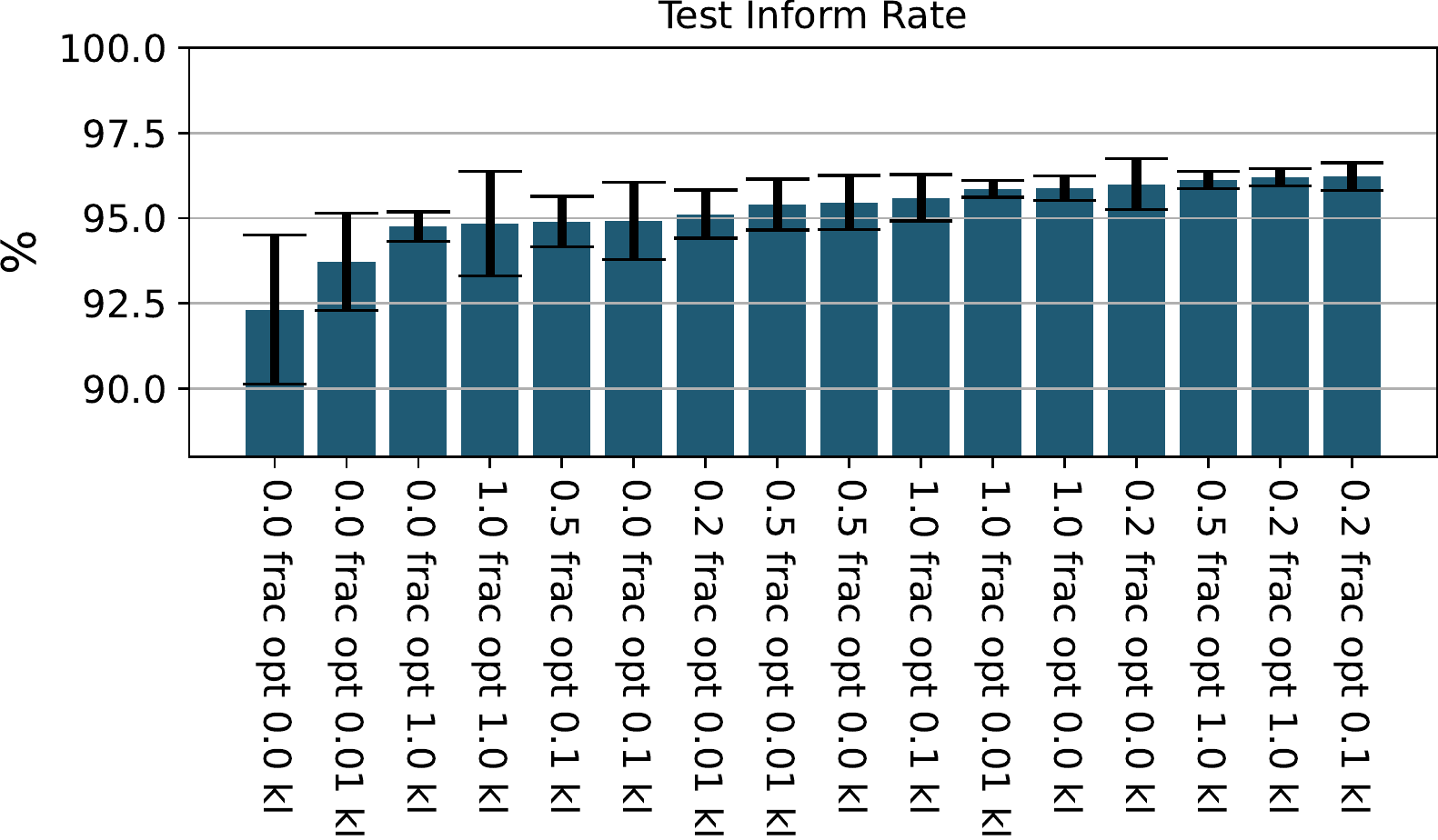}
        \caption{}
    \end{subfigure}
    \begin{subfigure}{\bsubfig}
        \includegraphics[width=\linewidth]{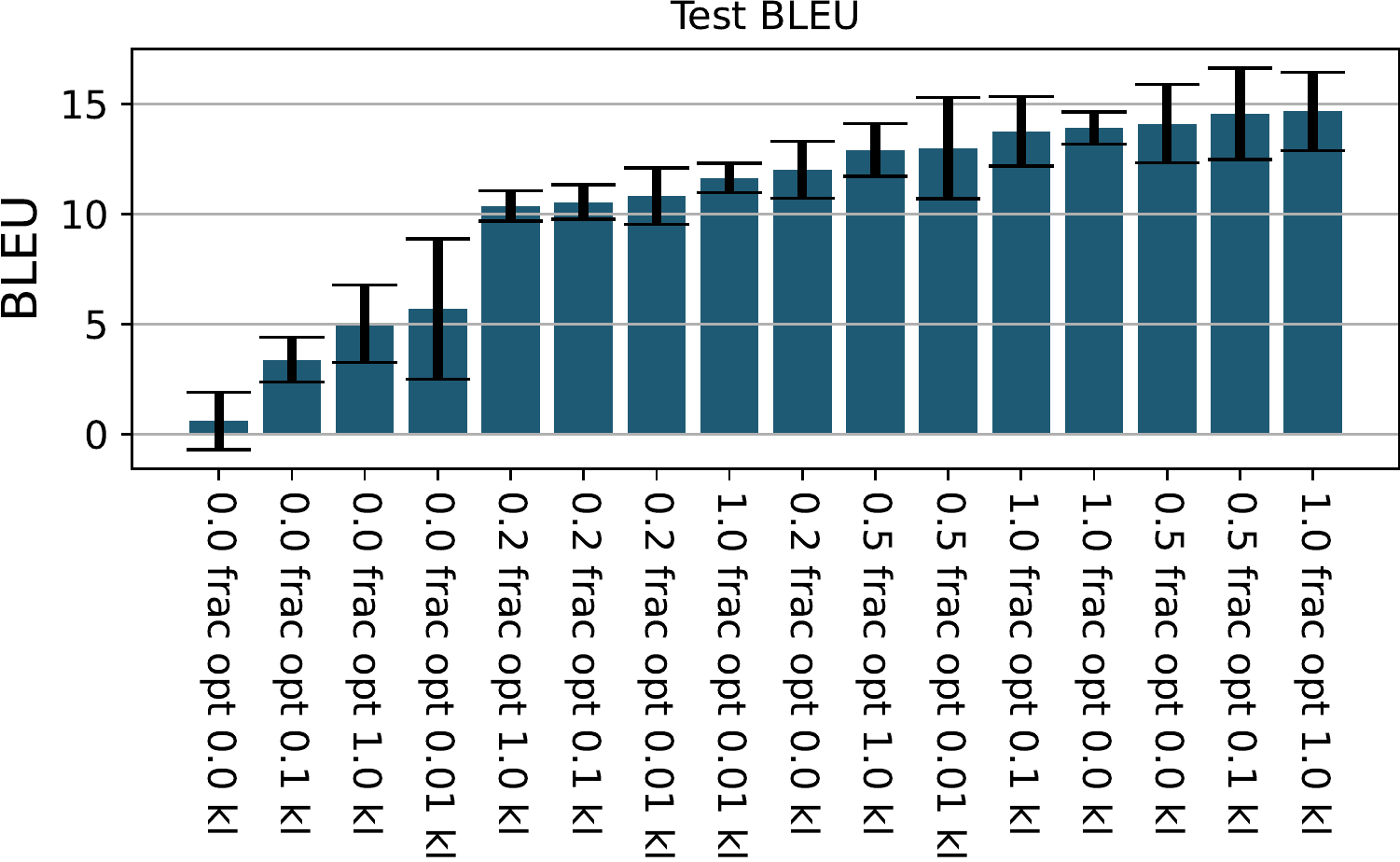}
        \caption{}
    \end{subfigure}
    \caption{Regularization sensitivity 400 epochs.}
    \label{app:fig:regularization-sensitivity}
\end{figure*}


\section{Model and Training Setup}\label{sec:app:modeltraining}
We make use of the Long-Short Term Memory \citep{hochreiter1997long} with a dot-product attention mechanism for the decoder architecture
and Gated Recurrent Unit \citep{cho2014learning} for the encoder architecture.
In both cases, we use a hidden size of $300$ and a latent embedding vector of size $200$.
Note that in the case of prior-posterior shuffling, there are shared parameters for $\p$ and $\qp$ through the encoder, the hidden state
is mapped to the appropriate mean and variance by independent neural networks.
Under the context $\rc$ we assume access to the ground-truth dialogue state, this can be easily replaced by the predictions of a dialogue state-tracker \citep{li2020coco,lee2021sumbt+}.

During training we choose the best models by keeping track of
validation success rate and BLEU score and evaluate the best performing checkpoint on the test set. As discussed, the training of \method{} is separated into two stages.

\begin{figure*}
    \setlength{\bsubfig}{{0.3\linewidth}}
    \begin{subfigure}{\bsubfig}
        \includegraphics[width=\linewidth]{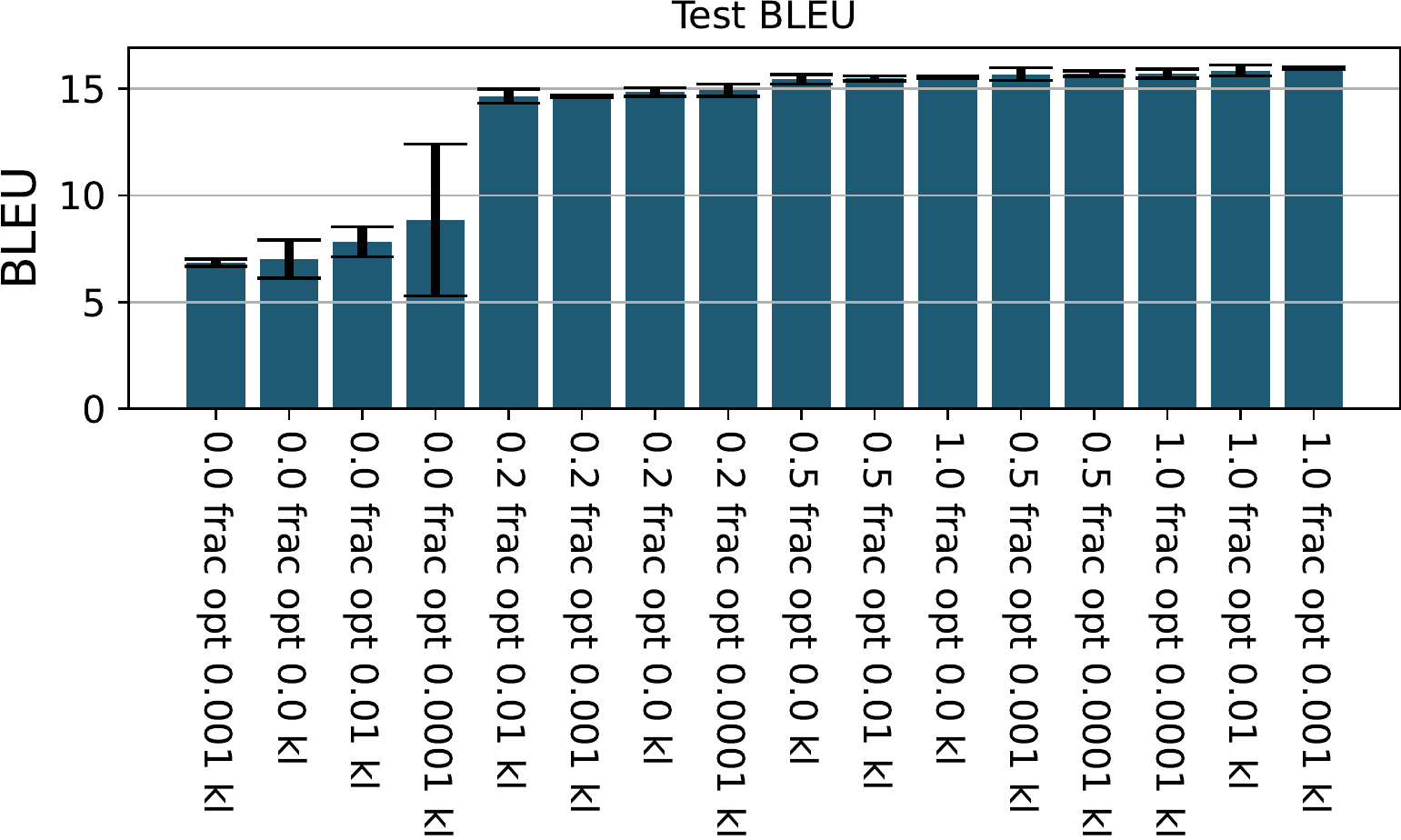}
        \caption{}
    \end{subfigure}
    \begin{subfigure}{\bsubfig}
        \includegraphics[width=\linewidth]{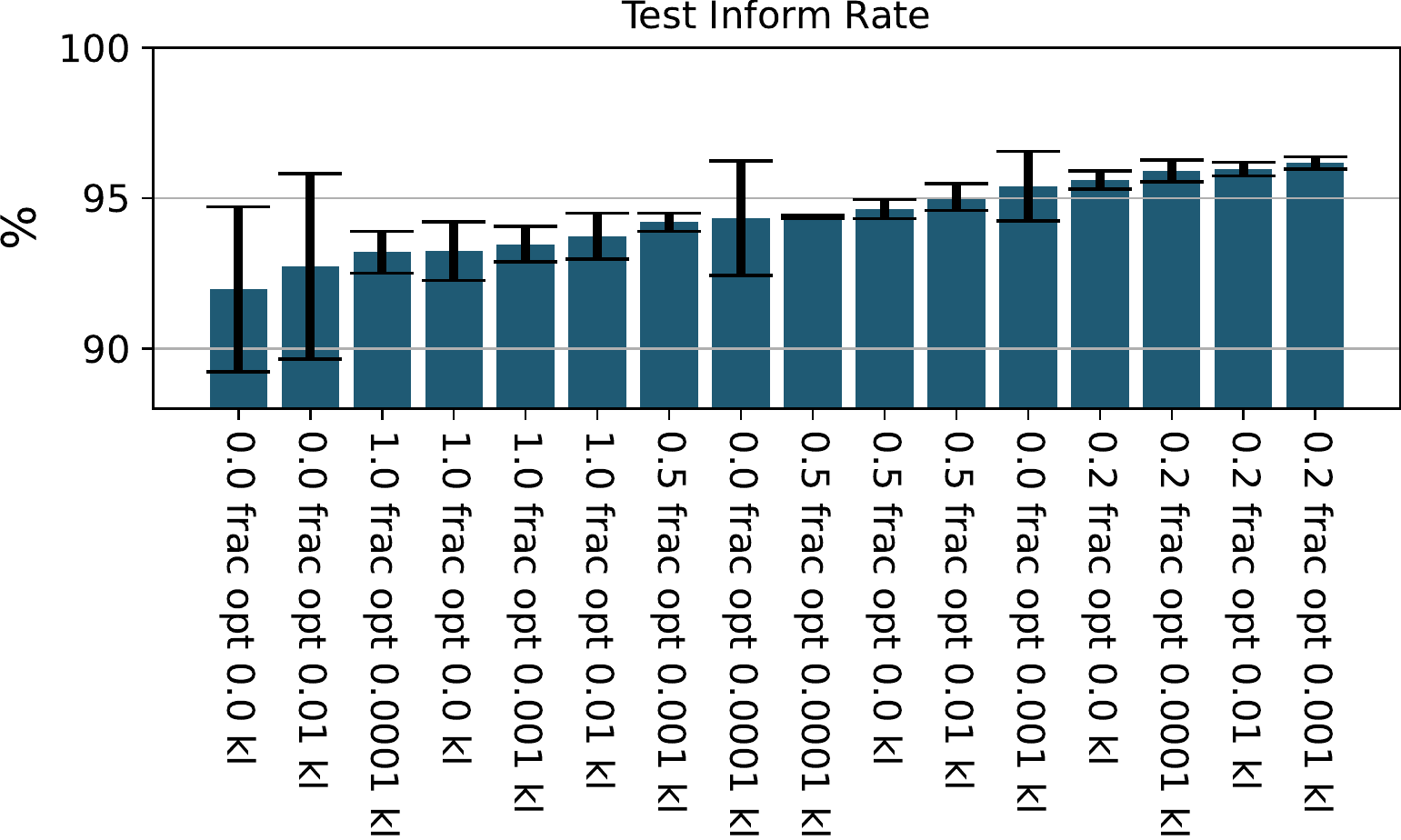}
        \caption{}
    \end{subfigure}
    \begin{subfigure}{\bsubfig}
        \includegraphics[width=\linewidth]{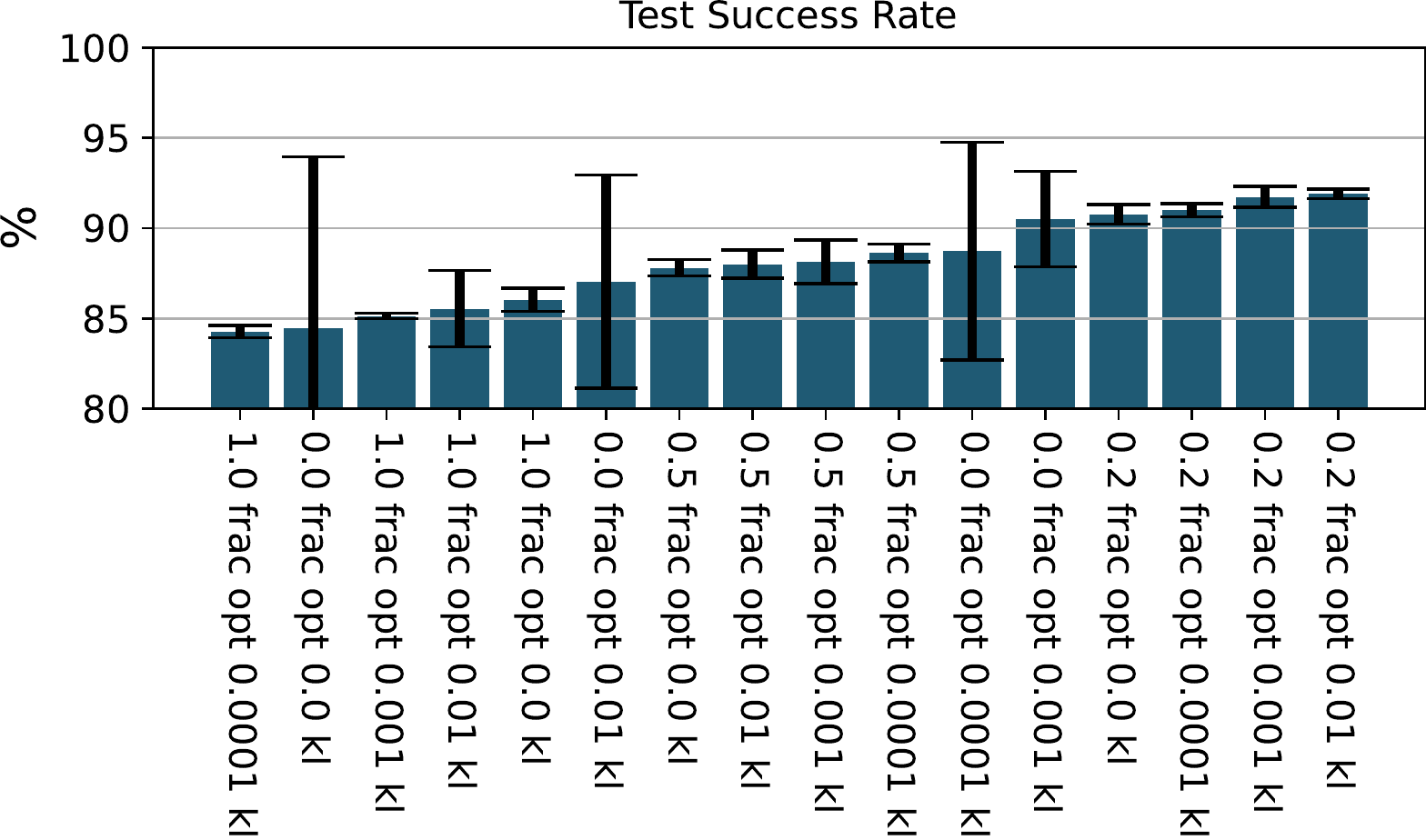}
        \caption{}
    \end{subfigure}
    \begin{subfigure}{\bsubfig}
        \includegraphics[width=\linewidth]{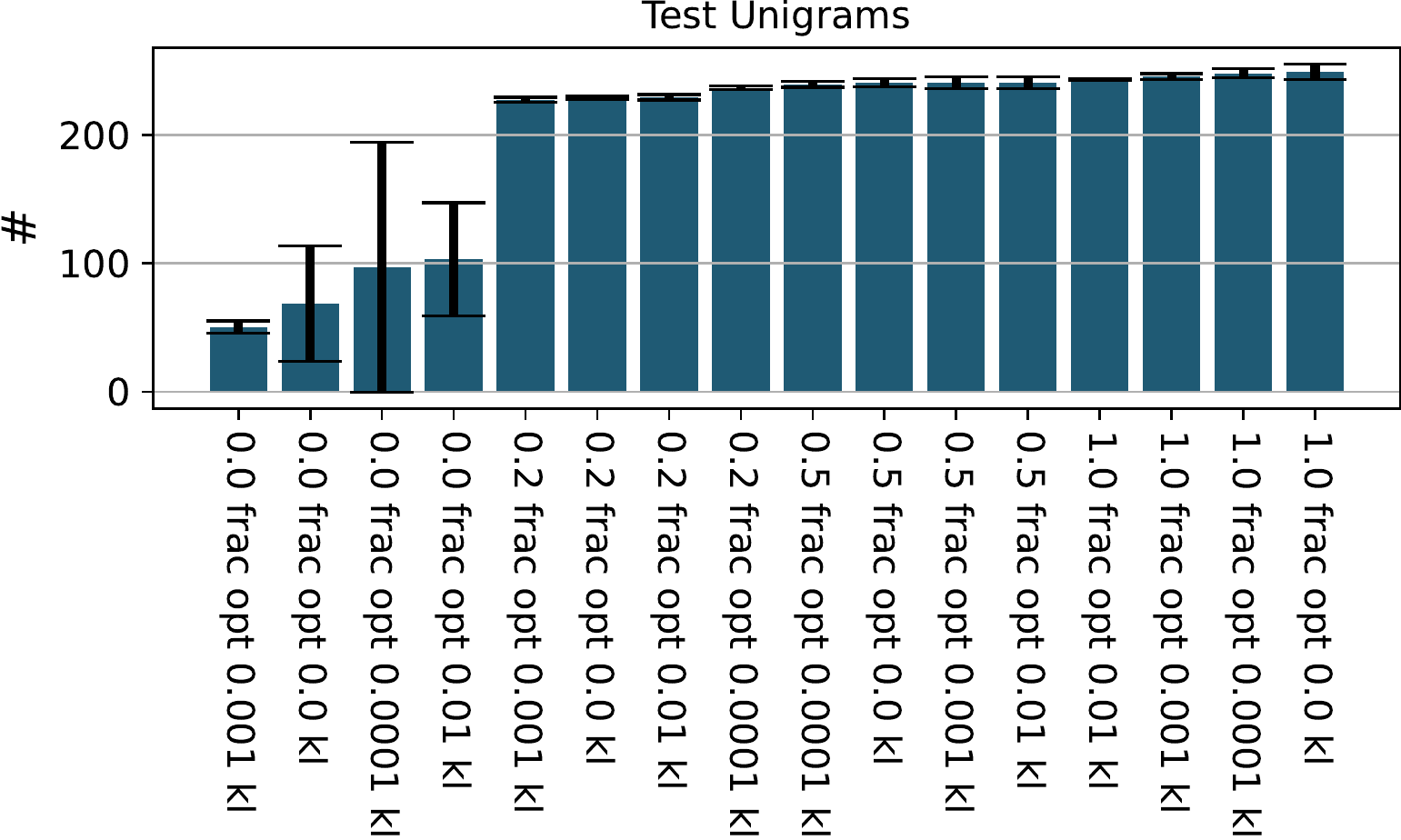}
        \caption{}
    \end{subfigure}
    \begin{subfigure}{\bsubfig}
        \includegraphics[width=\linewidth]{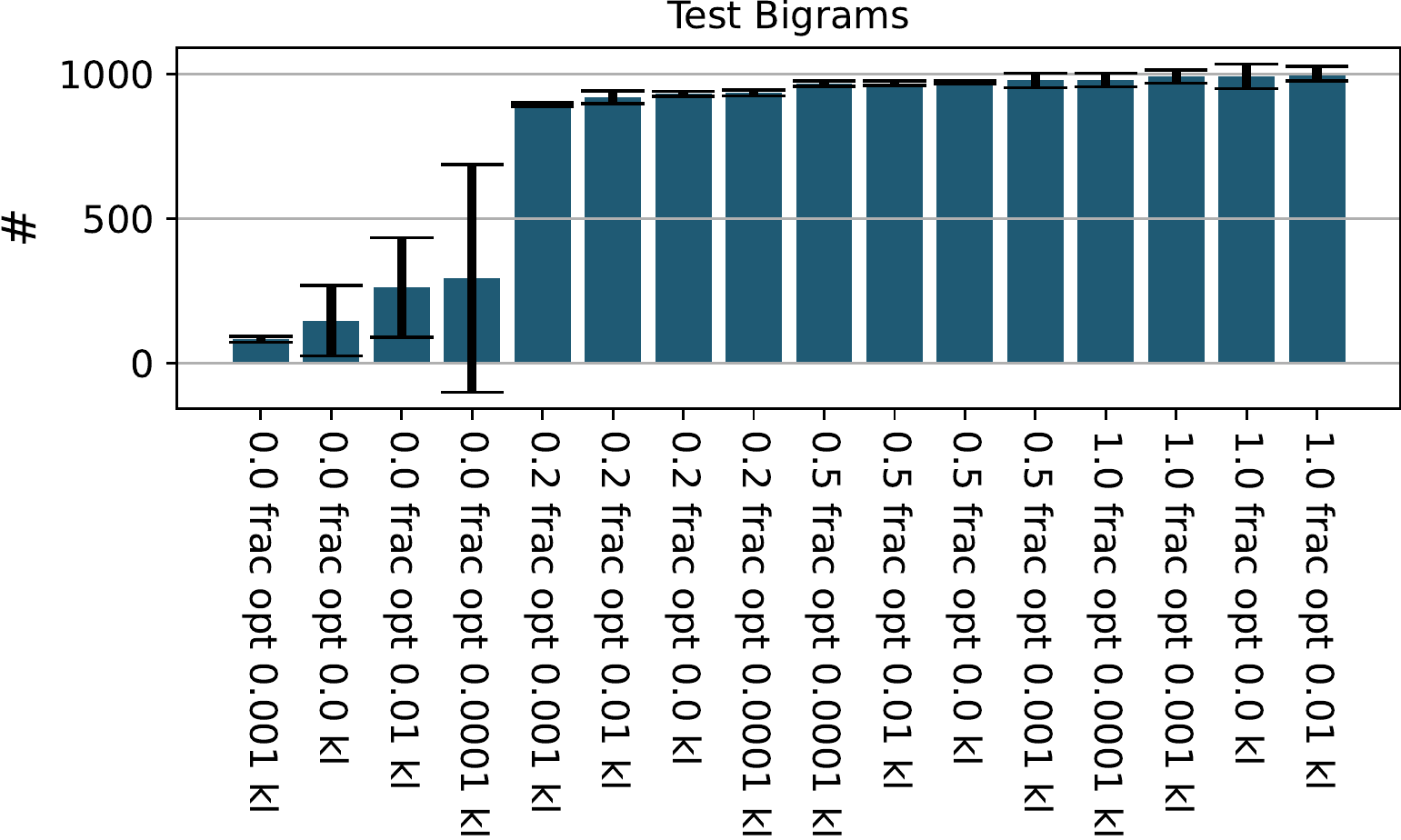}
        \caption{}
    \end{subfigure}
    \begin{subfigure}{\bsubfig}
        \includegraphics[width=\linewidth]{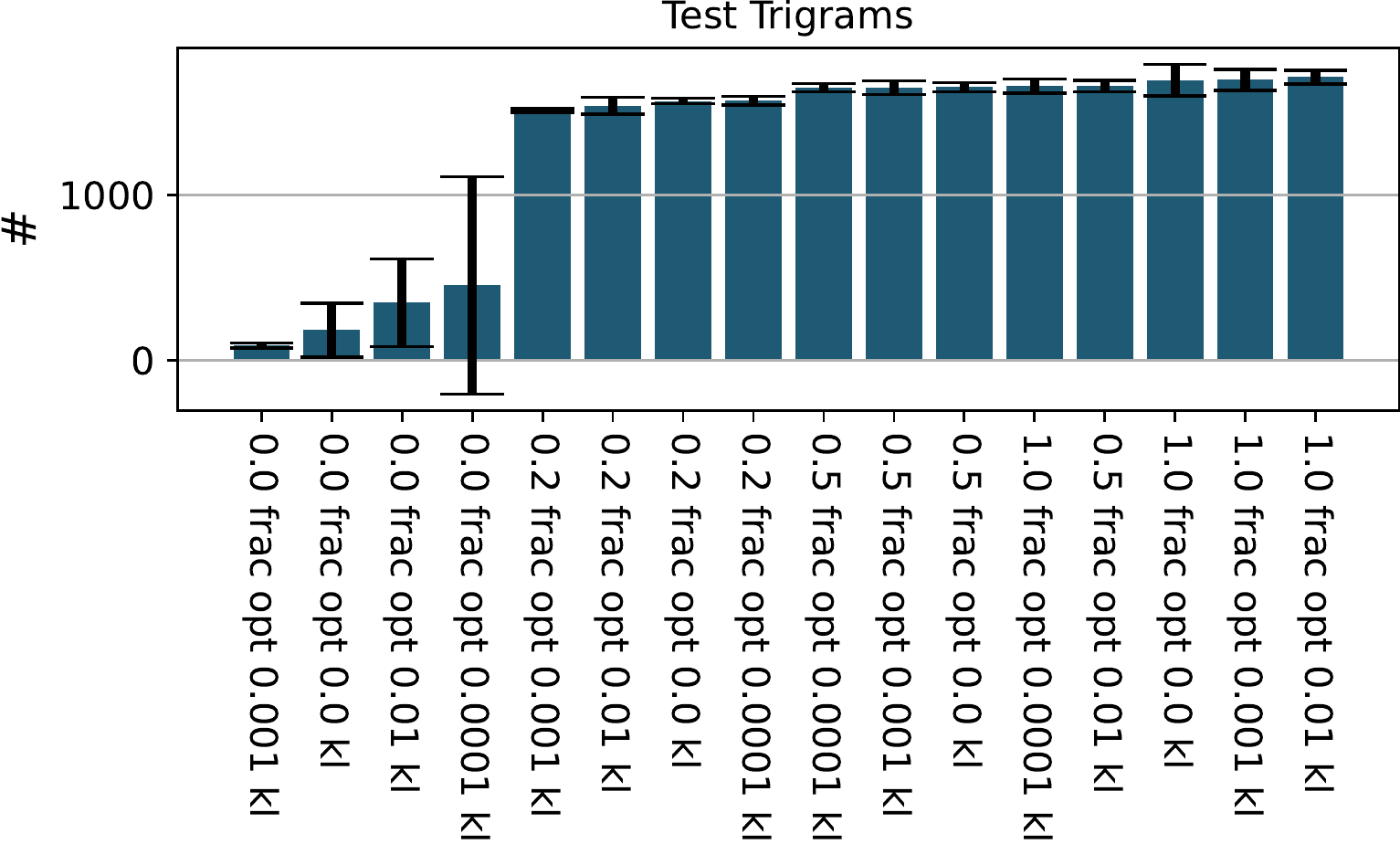}
        \caption{}
    \end{subfigure}
    \caption{Sensitivity analysis for the categorical latent case with 10 30-dimensional categoricals in the latent.}
    \label{app:fig:categorical-gs}
\end{figure*}






\end{document}